\documentclass[11pt]{article}
\usepackage[margin=1in]{geometry}
\usepackage{amsmath,amssymb,amsthm,mathtools}
\usepackage{graphicx}
\usepackage{booktabs}
\usepackage{microtype}
\usepackage{pifont}
\newcommand{\cmark}{\ding{51}}
\newcommand{\xmark}{\ding{55}}
\usepackage{algorithm}
\usepackage{algpseudocode}
\usepackage[hidelinks]{hyperref}

\newtheorem{theorem}{Theorem}
\newtheorem{proposition}{Proposition}
\newtheorem{corollary}{Corollary}
\newtheorem{lemma}{Lemma}
\theoremstyle{definition}
\newtheorem{definition}{Definition}
\newtheorem{remark}{Remark}
\newtheorem{example}{Example}

\newcommand{\T}{\mathcal{T}}
\newcommand{\Ow}{\mathcal{O}}

\newcommand{\Adm}{\Gamma}
\newcommand{\Ker}{K}
\newcommand{\Real}{\mathbb{R}}
\newcommand{\Z}{Z}
\newcommand{\diag}{\mathrm{diag}}
\newcommand{\Tr}{\mathrm{Tr}}
\newcommand{\NDE}{\mathrm{NDE}}
\newcommand{\PN}{\mathrm{PN}}

\title{\textbf{WorldKernel: A World Model is the Coupling Kernel\\ of Admissible Possible Worlds}\\[4pt]
\large Admissibility on the Diagonal, Counterfactual Coupling on the Off-Diagonal,\\ and Ontology-Structured Bounds up to a Sly--Sun Counting Barrier}
\author{Fabio Rovai\\ \texttt{fabio@thetesseractacademy.com}}
\date{June 2026}

\begin{document}
\maketitle

\begin{abstract}
A common assumption holds that enough observational and interventional data, given to a strong enough predictor, is effectively enough. We report a recurring failure mode that contradicts it: across hundreds of structural causal models, on quantities the data identify a strong predictor and a Bayesian baseline succeed, but on \emph{unidentified} quantities, the couplings between counterfactual worlds, the predictor collapses to a point, on $28\%$ of models to one no valid model can produce, while the truth is an admissible interval more data never narrows. The gap is structural: prediction cannot represent uncertainty over counterfactual couplings. We cast a world model as a single positive semidefinite \emph{coupling kernel} $\Ker_E(T,T')$ over admissible worlds, whose diagonal is the ordinary posterior (what a predictor recovers) and whose \emph{off-diagonal} is the cross-world coupling it cannot, and which every counterfactual reads. The paper is the theory of that off-diagonal, organised as five questions about one object. \emph{Is it real?} Two states with identical posteriors differ on a cross-world query, and the off-diagonal coincides with the coupling that fixes counterfactuals. \emph{Can we bound it?} Its positive semidefiniteness is partial-identifying information the marginals lack; enforcing it (a semidefinite relaxation) bounds counterfactuals in polynomial time where the exact response-type program is intractable, returning the exact bound at scales that program cannot reach. \emph{Can logical structure sharpen it?} Ontology axioms enter as further constraints and tighten the bound by up to a third, propagating through positive semidefiniteness to couplings they never touch, structure flat causal-bounds tools cannot use. \emph{Can we acquire it?} Targeted ``scars'', constraints learned from encountered infeasibilities, close the gap up to $4\times$ faster than untargeted ones (the margin narrowing as both saturate), with a spectral-gap-restoration guarantee. \emph{When is it reachable?} Accessing it is approximate counting of the admissible worlds, tractable below the Sly--Sun threshold and inapproximable above; we do not claim to beat the worst case, and we measure exactly where the phase-extremal recourse works (few-phase) and provably stalls (the glass). A world model is the coupling kernel of what can consistently be; its off-diagonal is the part prediction will never supply.
\end{abstract}

\section{Introduction}

We began not from a definition but from a failure. A widespread working assumption is that observational and interventional data, in sufficient quantity and given to a sufficiently strong predictor, suffice to reason about a domain. Section~\ref{sec:battery} reports a battery of three hundred structural causal models on which this assumption breaks in a specific, repeatable way. Each model is queried for a counterfactual. When the query is identified by observational and interventional data, a strong language model, a Bayesian structural-causal baseline, and the representation of this paper all answer it. When the query is \emph{not} identified, the correct answer is properly an interval of admissible values, and the systems diverge: the Bayesian baseline emits one feasible point that is frequently far from truth and whose sign is unstable, while the language model emits a point that on $28\%$ of models is not even feasible, a value no structural causal model can produce. More data does not help, because the unidentified quantity, the coupling between counterfactual worlds, is not a function of any observational or interventional distribution. The failure is therefore not statistical but representational: \emph{prediction cannot represent uncertainty over counterfactual couplings.}

This paper develops the minimal object that can represent that uncertainty. A world model is a single positive semidefinite \emph{coupling kernel}
\[
\Ker_E(T,T') = \langle T \mid \rho_E \mid T' \rangle
\]
over complete admissible worlds $T,T'$, conditioned on evidence $E$. The diagonal $\Ker_E(T,T)=\mu_E(T)$ is the classical posterior over admissible worlds, exactly what observational and interventional data, hence any predictor, can recover. The off-diagonal $\Ker_E(T,T')$ links alternative worlds, and it is exactly the cross-world coupling the predictor cannot represent. Time links stages within a world; this kernel links worlds. The posterior is only the diagonal shadow of the kernel, and the failure mode above is precisely the absence of the off-diagonal.

\paragraph{One object.} This paper is about a single object: the \emph{off-diagonal} of the world kernel, the cross-world coupling $\Ker_E(T,T')$ that links alternative worlds. A predictor recovers only the diagonal, the posterior over what is the case; the off-diagonal is the one thing it cannot represent, and it is exactly what a counterfactual reads. Everything else in the paper is the theory of that object, organised as the questions any account of it must answer.
\begin{enumerate}
\item \emph{Is it real, or just prediction in disguise?} Real and separate. Worlds can be predictively identical yet admissibility-distinct (Section~\ref{sec:diagonal}), and two states with identical posteriors can differ on a cross-world query (Section~\ref{sec:kernel}); empirically the off-diagonal coincides with the coupling that fixes counterfactuals, which strong predictors collapse (Section~\ref{sec:empirics}).
\item \emph{Can we bound it from data?} Yes, and the kernel's own structure does the work. Positive semidefiniteness is partial-identifying information the marginals discard, tightening counterfactual bounds in polynomial time where the exact program is intractable (Section~\ref{sec:psdbounds}).
\item \emph{Can logical structure sharpen it?} Yes. An ontology's axioms enter as further constraints and tighten the bound by up to a third, propagating to couplings they never touch, structure flat causal-bounds tools cannot use (Section~\ref{sec:ontobounds}).
\item \emph{Can we acquire it from experience?} Yes. Targeted ``scars'', constraints learned from encountered infeasibilities, close the identifiability gap up to $4\times$ faster than untargeted ones (narrowing as both saturate), with a Cheeger guarantee (Section~\ref{sec:onlinescar}).
\item \emph{When is it reachable?} Sharply bounded. Accessing it is approximate counting of the admissible worlds, tractable below the Sly--Sun threshold and inapproximable above it, the order parameter $(d-1)\eta$ crossing $1$ between constraint-graph degree $5$ and $6$ (Section~\ref{sec:barrier}).
\end{enumerate}
One object, five questions, not five ideas. Section~\ref{sec:intelligence} states what the object buys: intelligence as closure-preserving counterfactual competence.

\paragraph{What is new and what is borrowed.} We do not claim a new sampling-to-counting theorem, a new shielding method, or a new partial-identification algorithm. The contribution is the synthesis: the identification of a single object (the positive semidefinite kernel), the proof that its off-diagonal is a distinct and counterfactually meaningful direction, the demonstration that its semidefinite structure tightens counterfactual bounds beyond the marginals on instances where the exact response-type program is intractable, the placement of a known approximate-counting barrier as the boundary for full off-diagonal access, and the empirical demonstration that the off-diagonal is load-bearing for causal reasoning where strong predictors collapse.

\section{Possible worlds, queries, and the insufficiency of prediction}
\label{sec:diagonal}

\begin{definition}[Worlds and query classes]
Let $\Omega$ be a set of complete admissible worlds (or histories). A query class $Q$ is a set of functions $q:\Omega\to Y_q$. We distinguish predictive queries $Q_{\mathrm{pred}}$ (e.g.\ $p(o_{t+1}\mid o_{\le t},a_t)$), entailment queries $Q_{\mathrm{ent}}$ ($\Ow\models\varphi$), admissibility queries $Q_{\mathrm{adm}}$ ($a\in\Adm_\Ow(s)$), intervention queries $Q_{\mathrm{caus}}$ ($P(Y\mid do(a),E)$), and kernel queries $Q_{\mathrm{ker}}$ ($\Ker_E(\omega,\omega')$). The world query class is $Q_{\mathrm{world}}=Q_{\mathrm{pred}}\cup Q_{\mathrm{ent}}\cup Q_{\mathrm{adm}}\cup Q_{\mathrm{caus}}\cup Q_{\mathrm{ker}}$.
\end{definition}

\begin{definition}[Query equivalence and completeness]
For a class $Q$, set $\omega\sim_Q\omega'$ iff $q(\omega)=q(\omega')$ for all $q\in Q$. A representation $\phi:\Omega\to Z$ is $Q$-complete if for each $q\in Q$ there is a decoder $f_q$ with $q=f_q\circ\phi$.
\end{definition}

\begin{theorem}[Query quotient]
\label{thm:quotient}
The quotient map $\pi_Q:\Omega\to\Omega/\!\sim_Q$ is $Q$-complete, and any $Q$-complete $\phi$ factors through it: $\phi(\omega)=\phi(\omega')\Rightarrow\omega\sim_Q\omega'$. Hence $\Omega/\!\sim_Q$ is the coarsest representation sufficient for $Q$.
\end{theorem}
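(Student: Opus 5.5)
The argument is the standard universal property of a kernel quotient, and I would prove the three claims in order.

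\emph{Completeness of $\pi_Q$.} I will show that $\pi_Q$ is $Q$-complete by building the decoders explicitly. Fix $q\in Q$ and define $f_q:\Omega/\!\sim_Q\to Y_q$ by $f_q([\omega])=q(\omega)$. This is well defined: if $[\omega]=[\omega']$ then $\omega\sim_Q\omega'$, and by the definition of $\sim_Q$ this gives $q(\omega)=q(\omega')$. The identity $q=f_q\circ\pi_Q$ then holds by construction.

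\emph{Factorization.} Let $\phi:\Omega\to Z$ be $Q$-complete with decoders $f_q$. Suppose $\phi(\omega)=\phi(\omega')$. Then for every $q\in Q$,
\[
q(\omega)=f_q(\phi(\omega))=f_q(\phi(\omega'))=q(\omega'),
\]
so $\omega\sim_Q\omega'$. This is exactly the stated implication.

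From the implication I would extract the factorization in its usual form: a map $g:\phi(\Omega)\to\Omega/\!\sim_Q$ with $g(\phi(\omega))=\pi_Q(\omega)$. Choose any preimage to define $g$; the implication guarantees the value does not depend on the choice. If one wants $g$ on all of $Z$, extend it arbitrarily off the image $\phi(\Omega)$. This step is the only mild subtlety. When $\Omega/\!\sim_Q$ is empty, $\Omega$ is empty and the statement is vacuous; in any case only the image of $\phi$ matters.

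\emph{Coarsest.} Read ``coarsest'' as follows: the partition of $\Omega$ induced by any $Q$-complete $\phi$, namely its fibres, refines the partition $\Omega/\!\sim_Q$. This is precisely the factorization step. Since $\pi_Q$ is itself $Q$-complete, it is a minimal element under refinement among $Q$-complete representations, and in fact the minimum, since every other one refines it. Uniqueness up to bijection would follow in the usual way.

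There is no real obstacle in this argument. The only care needed is to check that each $f_q$ is well defined and to state precisely what ``coarsest'' means, namely refinement of fibres. The proof uses no earlier results, since this is the first theorem of the paper.
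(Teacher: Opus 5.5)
Your proposal is correct and matches the paper's proof: you build the decoders $f_q([\omega])=q(\omega)$ with the same well-definedness check, and you get the factorization from the same chain $q(\omega)=f_q(\phi(\omega))=f_q(\phi(\omega'))=q(\omega')$. Your extra remarks, constructing the induced map on $\phi(\Omega)$ and reading ``coarsest'' as refinement of fibres, spell out what the paper leaves implicit but add no new ingredient.
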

\begin{proof}
Define $f_q([\omega])=q(\omega)$, well defined since $[\omega]=[\omega']$ implies $q(\omega)=q(\omega')$; then $q=f_q\circ\pi_Q$. Conversely if $\phi$ is complete and $\phi(\omega)=\phi(\omega')$ then $q(\omega)=f_q(\phi(\omega))=f_q(\phi(\omega'))=q(\omega')$ for all $q$, so $\omega\sim_Q\omega'$.
\end{proof}

\begin{corollary}[Formal intelligence criterion]
If $Q_{\mathrm{pred}}\subsetneq Q_{\mathrm{world}}$ strictly on a domain, a predictive representation can be complete for prediction yet incomplete for world understanding. World competence requires a representation at least as fine as $\Omega/\!\sim_{Q_{\mathrm{world}}}$.
\end{corollary}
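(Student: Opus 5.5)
The corollary follows from Theorem~\ref{thm:quotient} together with the observation that a strict inclusion of query classes can make the equivalence relations differ. We read "$Q_{\mathrm{pred}}\subsetneq Q_{\mathrm{world}}$ strictly on a domain" in the substantive sense: some $q^\ast\in Q_{\mathrm{world}}\setminus Q_{\mathrm{pred}}$ separates two worlds $\omega,\omega'$ with $\omega\sim_{Q_{\mathrm{pred}}}\omega'$. Mere set inclusion is not enough, since an extra query could be a function of predictive ones. This reading is exactly what the later sections exhibit, namely predictively identical but admissibility-distinct worlds. The plan has two parts, one for each sentence of the corollary.

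\emph{First sentence.} Take $\phi=\pi_{Q_{\mathrm{pred}}}$. By Theorem~\ref{thm:quotient} it is $Q_{\mathrm{pred}}$-complete. Suppose it were $Q_{\mathrm{world}}$-complete. Then $q^\ast=f_{q^\ast}\circ\phi$ for some decoder $f_{q^\ast}$. Since $\phi(\omega)=\phi(\omega')$, this would force $q^\ast(\omega)=q^\ast(\omega')$, a contradiction. Equivalently, by the factoring half of Theorem~\ref{thm:quotient}, any $Q_{\mathrm{world}}$-complete $\phi$ satisfies $\phi(\omega)=\phi(\omega')\Rightarrow \omega\sim_{Q_{\mathrm{world}}}\omega'$, and $\pi_{Q_{\mathrm{pred}}}$ violates this at the pair $(\omega,\omega')$. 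So this predictive representation is complete for prediction but not for world understanding.

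\emph{Second sentence.} Let $\phi$ be any $Q_{\mathrm{world}}$-complete representation. Theorem~\ref{thm:quotient} gives that $\phi(\omega)=\phi(\omega')$ implies $\omega\sim_{Q_{\mathrm{world}}}\omega'$. Hence the partition of $\Omega$ induced by $\phi$ refines $\Omega/\!\sim_{Q_{\mathrm{world}}}$. The map $\phi(\omega)\mapsto[\omega]$ is therefore well defined on the image of $\phi$, so $\phi$ is at least as fine as the world quotient, which is exactly the claim.

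The only real obstacle is interpretive rather than technical: making "strictly on a domain" precise so that the separating pair exists. I would state this as the hypothesis explicitly, note that it is the content of "strict," and point forward to Section~\ref{sec:kernel} for a concrete witness.
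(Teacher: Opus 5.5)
Your proof is correct and matches the paper's implicit argument: the corollary is stated without a separate proof, as an immediate consequence of both halves of Theorem~\ref{thm:quotient}, and your explicit hypothesis (some pair is $Q_{\mathrm{pred}}$-equivalent yet separated by a query in $Q_{\mathrm{world}}$) is the right way to make ``strictly'' precise, since bare set inclusion would not suffice. The concrete separating pair the paper supplies is Theorem~\ref{thm:predins}, in the same section (predictively equivalent but admissibility-distinct worlds), so that is the forward reference to use rather than Section~\ref{sec:kernel}.
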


\begin{theorem}[Predictive insufficiency]
\label{thm:predins}
There exist worlds $W_0,W_1$ that are predictively equivalent but not world-equivalent. No representation trained only to be prediction-complete is guaranteed complete for world understanding.
\end{theorem}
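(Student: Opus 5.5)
The plan is to prove the existence claim by an explicit two-world construction, and then derive the second sentence from Theorem~\ref{thm:quotient}. I would take a minimal domain in which the predictive queries cannot see some admissibility or entailment fact. Concretely, let an ontology $\Ow$ contain an axiom forbidding an action $a^\ast$ in a state $s$. Let $W_0$ be a world (history) in which the agent never takes $a^\ast$ and $a^\ast\notin\Adm_\Ow(s)$. Let $W_1$ be the identical observable history, with the same observation sequence $o_{\le t}$ and the same actions actually taken, but generated under an ontology $\Ow'$ that lacks the axiom, so that $a^\ast\in\Adm_{\Ow'}(s)$.

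The first step is to check predictive equivalence. Every predictive query $p(o_{t+1}\mid o_{\le t},a_t)$ is evaluated only on the actions actually taken along the history. Both worlds share the same observation kernel on those actions, so $q(W_0)=q(W_1)$ for all $q\in Q_{\mathrm{pred}}$. This gives $W_0\sim_{Q_{\mathrm{pred}}}W_1$.

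The second step is to exhibit a query in $Q_{\mathrm{world}}$ that separates the two worlds. The admissibility query $q_{\mathrm{adm}}(W)=[a^\ast\in\Adm(s)]$ takes different values on $W_0$ and $W_1$. Equivalently, the entailment query $\Ow\models\neg a^\ast(s)$ separates them. Hence $W_0\not\sim_{Q_{\mathrm{world}}}W_1$. As a backup, and to tie the construction to the kernel, I would give a causal variant with the same structure. Take two structural causal models over binary $X,Y$ with identical observational and interventional distributions: one with $Y=X$, and one mixing the response types ``always $1$'' and ``always $0$'' in the right proportions. These agree on all of $Q_{\mathrm{pred}}$ and on $P(Y\mid do(x))$, yet they differ on the counterfactual probability of necessity. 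That shows the separation persists even within $Q_{\mathrm{caus}}\cup Q_{\mathrm{ker}}$.

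For the second sentence, suppose a representation $\phi$ is prediction-complete, and consider a training objective that is invariant under relabelling within $\sim_{Q_{\mathrm{pred}}}$ classes. The map $\phi=\pi_{Q_{\mathrm{pred}}}$ then satisfies that objective exactly. It identifies $W_0$ and $W_1$, so $\phi(W_0)=\phi(W_1)$. If $\phi$ were $Q_{\mathrm{world}}$-complete, Theorem~\ref{thm:quotient} would force $W_0\sim_{Q_{\mathrm{world}}}W_1$, which contradicts the first part. So prediction-completeness does not guarantee world-completeness, and the Corollary applies because $Q_{\mathrm{pred}}\subsetneq Q_{\mathrm{world}}$ here.

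The main obstacle is making the predictive equivalence airtight. In the admissibility version, one must ensure that no predictive query conditions on the forbidden action, or else fix the convention that $p(\cdot\mid\cdot,a^\ast)$ is undefined or identical in both worlds. I would first try restricting $Q_{\mathrm{pred}}$ to on-path actions. If that convention seems too strong, I would fall back on the SCM example, where observational and interventional equivalence is a direct calculation.
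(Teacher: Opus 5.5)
Your main construction is the paper's: two worlds with the same environment, differing only in whether the ontology forbids an action, separated by the admissibility query, with Theorem~\ref{thm:quotient} giving the second sentence. Your explicit use of $\pi_{Q_{\mathrm{pred}}}$ as a prediction-complete but not world-complete representation is, if anything, a cleaner version of the paper's ``a prediction-only representation has no information to do so''.

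The obstacle you flag goes away the way the paper handles it. Take the worlds to be environment specifications $(S,A,T,O,\Ow_i)$ rather than realized histories, since a predictive conditional is a property of the generating kernel, not of one history. Give both worlds the same $T$ and $O$ on all of $S\times A$, and let the forbidden action be physically executable (the paper's ``physically traversable'' region $L$). Then $p(o_{t+1}\mid o_{\le t},a)$ coincides in both worlds for every $a$, including $a^\ast$, because the axiom is a deontic fact about $\Adm_\Ow$ rather than a change to the dynamics. So you do not need to restrict $Q_{\mathrm{pred}}$ to on-path actions, and you should not: that restriction only proves equivalence for a smaller query class than the theorem intends.

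Do not fall back on the SCM example as written, because it fails. Under $Y=X$ you have $P(Y_0{=}1)=0$ and $P(Y_1{=}1)=1$. The Fr\'echet--Hoeffding range of the cross-world coupling therefore collapses to the single point $0$, so every counterfactual is already identified from the interventional data. Separately, any mixture of the constant types ``always $1$'' and ``always $0$'' satisfies $P(Y{=}1\mid do(X{=}0))=P(Y{=}1\mid do(X{=}1))$, so it cannot reproduce the interventional law of $Y=X$ at all.

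One repair compares two $50/50$ mixtures: the identity and negation types, versus the two constant types. Both give $P(Y{=}1\mid do(x))=\tfrac12$ for each $x$, and the same observational law under randomized $X$, yet $\PN=1$ versus $\PN=0$. Alternatively, use the witness of Algorithm~\ref{alg:witness} with marginals $0.5$ and $0.7$.
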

\begin{proof}
Take a deterministic arena with states $S$, actions $A$, transition $T:S\times A\to S$, and a region $L\subset S$ that is physically traversable. Let $W_0=(S,A,T,O,\Ow_0)$ and $W_1=(S,A,T,O,\Ow_1)$ share the same dynamics and observation map, with $\Ow_0$ entailing no restriction on $L$ and $\Ow_1$ entailing that transitions into $L$ are forbidden. All predictive conditionals over observations coincide, so $W_0,W_1$ are predictively equivalent. Choose $s,a$ with $T(s,a)\in L$. Then $a\in\Adm_{\Ow_0}(s)$ but $a\notin\Adm_{\Ow_1}(s)$, so the admissibility query separates them. By Theorem~\ref{thm:quotient} any admissibility-complete representation must separate them, and a prediction-only representation has no information to do so.
\end{proof}

\paragraph{The diagonal is certifiable.} The diagonal of the kernel is admissibility, and it can be maintained with a guarantee independent of the learned model. Let $\Phi_\Ow\subseteq S$ be the entailed forbidden set and $V_\Ow$ the entailed transition-violation predicate.

\begin{theorem}[Closure-preserving horizon soundness]
\label{thm:soundness}
Let a projector $g_\Ow$ be single-step sound: for $s\notin\Phi_\Ow$ and any proposed action $a$, $g_\Ow(s,a)\notin\Phi_\Ow$ and $\neg V_\Ow(s,g_\Ow(s,a))$. If $s_0\notin\Phi_\Ow$ and $s_{t+1}=g_\Ow(s_t,a_t)$ for an arbitrary, possibly arbitrarily inaccurate action sequence, then $s_t\notin\Phi_\Ow$ and $\neg V_\Ow(s_t,s_{t+1})$ for all $t$.
\end{theorem}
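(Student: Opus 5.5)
The plan is a direct induction on $t$, with the safety invariant $s_t\notin\Phi_\Ow$ as the inductive hypothesis. The important point is that the single-step soundness of $g_\Ow$ is quantified over \emph{every} proposed action $a$. So no property of the sequence $(a_t)$ is ever used: it may be adversarial, random, or the output of an arbitrarily wrong learned model. I will state this explicitly, because it is what makes the guarantee independent of the learned model.

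\emph{Base case.} The claim $s_0\notin\Phi_\Ow$ is the hypothesis.

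\emph{Inductive step.} Suppose $s_t\notin\Phi_\Ow$. The successor is $s_{t+1}=g_\Ow(s_t,a_t)$, and $s_t$ lies in the domain where single-step soundness applies. Soundness with $s=s_t$ and $a=a_t$ gives two facts at once:
\begin{itemize}
\item $g_\Ow(s_t,a_t)\notin\Phi_\Ow$, which is $s_{t+1}\notin\Phi_\Ow$ and propagates the invariant;
\item $\neg V_\Ow(s_t,g_\Ow(s_t,a_t))$, which is $\neg V_\Ow(s_t,s_{t+1})$, the transition conclusion at time $t$.
\end{itemize}
Since the transition conclusion at time $t$ needs only $s_t\notin\Phi_\Ow$, I will prove the conjunction ``$s_t\notin\Phi_\Ow$ and $\neg V_\Ow(s_t,s_{t+1})$'' for all $t$ by induction on the first conjunct alone. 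The second conjunct is then read off at each step.

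There is no real obstacle here; the argument is essentially bookkeeping. The only point needing care is the order of the two conclusions. The non-violation claim at step $t$ must be derived \emph{from} the invariant at step $t$, not assumed alongside it. The invariant at step $t+1$ must come from soundness applied at step $t$. Keeping the induction on state membership, with the transition property as a consequence, avoids any circularity.

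I will close by noting that the horizon is unbounded: the induction covers every $t\in\mathbb{N}$, which gives the ``horizon soundness'' in the statement. A finite horizon is the same argument truncated.
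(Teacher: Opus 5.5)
Your proof is correct and is the same induction the paper gives: the base case $s_0\notin\Phi_\Ow$ is the hypothesis, and applying single-step soundness at $(s_t,a_t)$ yields both $s_{t+1}\notin\Phi_\Ow$ and $\neg V_\Ow(s_t,s_{t+1})$. You also note that soundness holds for every action, so nothing about $(a_t)$ is used; this is the same model-independence point the paper makes right after its proof.
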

\begin{proof}
Induction. Base $s_0\notin\Phi_\Ow$. If $s_t\notin\Phi_\Ow$, single-step soundness gives $g_\Ow(s_t,a_t)\notin\Phi_\Ow$ and $\neg V_\Ow(s_t,g_\Ow(s_t,a_t))$, and $s_{t+1}=g_\Ow(s_t,a_t)$.
\end{proof}
The point is that this competence is independent of the learned model $F_\theta$: an arbitrarily inaccurate predictor still cannot leave the admissible set. Empirically, in a permissive-lava arena that we use as the admissibility witness, an unshielded predictor reaches goals by entering a forbidden region on $36.4\%$ of episodes ($91/250$), while the diagonal projector, using the same wall-blind learned model, commits zero forbidden transitions over $250$ episodes (Figure~\ref{fig:lavatraj}). The same OWL-derived transition grammar improves competence and not only safety: transplanted onto an unmodified learned planner it raises cross-room success across a range of wall topologies without retraining. The entailed constraint is not a safety patch bolted on, it is information the predictor lacked.

\begin{figure}[t]
\centering
\includegraphics[width=0.72\textwidth]{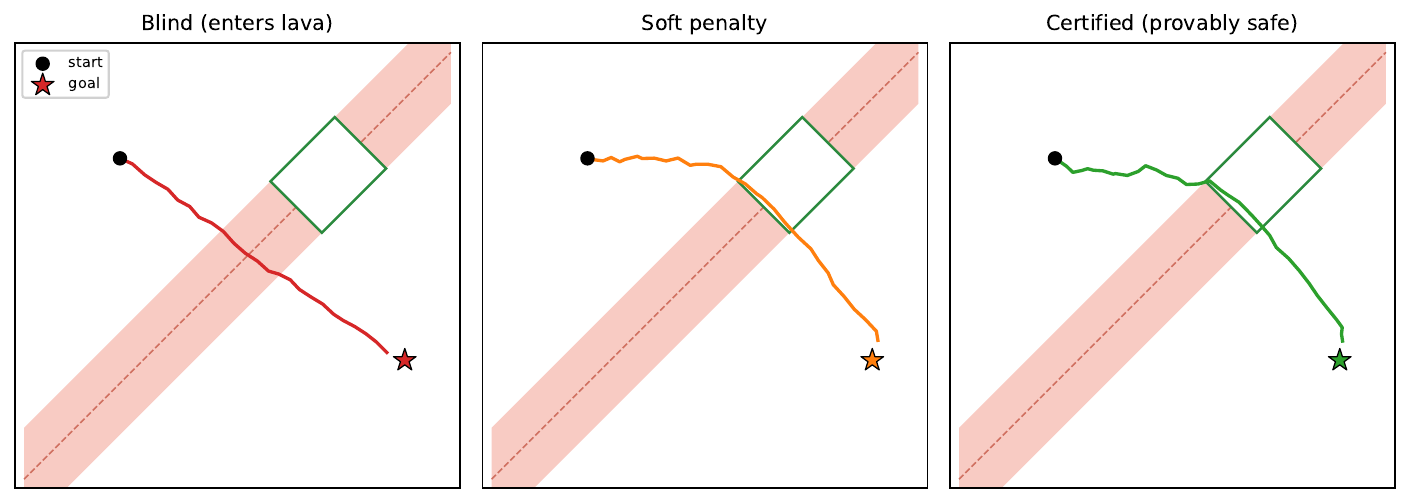}
\caption{Permissive-lava arena. The unshielded planner reaches the goal by crossing the forbidden band; the diagonal projector, on the same learned model, threads the entailed door and never enters the band.}
\label{fig:lavatraj}
\end{figure}

\paragraph{The diagonal curves space.} Admissibility is not only logical, it is geometric, which is the cleanest way to see that the entailed structure is a property of the action space and not a post-hoc filter.

\begin{proposition}[Entailed geometry]
\label{prop:geo}
Let $\Phi_\Ow\subset X\subset\Real^2$ be the entailed forbidden set and $g_M=e^{2M\mathbf{1}_{\Phi_\Ow}}\delta$ the conformal metric with barrier height $M$. Any limit as $M\to\infty$ of minimizing $g_M$-paths from $a$ to $b$ is a shortest path among rectifiable paths avoiding $\Phi_\Ow$, and the Gaussian curvature $K=-e^{-2\phi}\Delta\phi$ concentrates on $\partial\Phi_\Ow$.
\end{proposition}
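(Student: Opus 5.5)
The plan has two parts: a $\Gamma$-convergence argument for the path claim, and a distributional computation for the curvature claim.

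For the first part, write the $g_M$-length of a rectifiable path $\gamma:[0,1]\to X$ as
\[
L_M(\gamma)=\int_0^1 e^{M\mathbf{1}_{\Phi_\Ow}(\gamma(t))}\,|\dot\gamma(t)|\,dt ,
\]
so the conformal factor is $e^{\phi}$ with $\phi=M\mathbf{1}_{\Phi_\Ow}$. I would work with the length functional directly, since for a discontinuous factor there is no smooth Riemannian structure. I assume $\Phi_\Ow$ is open or has Lipschitz boundary, and that the avoiding class is nonempty. The key steps are as follows.

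\begin{enumerate}
\item \emph{Monotone upper bound.} The functionals $L_M$ increase pointwise in $M$. On paths avoiding $\Phi_\Ow$ (up to a null set of times) they equal the Euclidean length $L_\infty$. Hence $\inf L_M\le d^*$, where $d^*$ is the infimum of Euclidean length over paths from $a$ to $b$ avoiding $\Phi_\Ow$.
\item \emph{Compactness.} Near-minimizers $\gamma_M$ then have Euclidean length at most $d^*+o(1)$. Parametrize them by constant speed. Arzel\`a--Ascoli gives a uniformly convergent subsequence $\gamma_M\to\gamma$, and lower semicontinuity of Euclidean length gives $L_\infty(\gamma)\le d^*$.
\item \emph{Avoidance of the limit.} The time $\gamma_M$ spends in $\Phi_\Ow$ is at most $(d^*+1)e^{-M}\to 0$. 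If $\gamma$ entered the interior of $\Phi_\Ow$, openness plus uniform convergence would force $\gamma_M$ to spend a fixed positive time there, a contradiction. So $\gamma$ avoids the open set $\Phi_\Ow$ and is a shortest avoiding path.
\end{enumerate}

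The main obstacle is the gap between ``avoiding the interior'' and ``avoiding $\Phi_\Ow$''. Limits can slide along $\partial\Phi_\Ow$, so the statement must be read with closure-avoiding paths, or with $\Phi_\Ow$ open. I would state that convention explicitly; this is what makes shortest paths wrap tightly around the boundary.

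For the curvature claim, a step function has no classical Laplacian, so I would mollify: take $\phi_\varepsilon=M(\mathbf{1}_{\Phi_\Ow}*\eta_\varepsilon)$. Then $\Delta\phi_\varepsilon$ is supported in an $\varepsilon$-neighbourhood of $\partial\Phi_\Ow$, so $K_\varepsilon=-e^{-2\phi_\varepsilon}\Delta\phi_\varepsilon$ vanishes off that neighbourhood. As $\varepsilon\to0$, the distributional Laplacian $\Delta\mathbf{1}_{\Phi_\Ow}=-\partial_\nu\delta_{\partial\Phi_\Ow}$ is a single-layer/double-layer distribution carried by the boundary. So $K$, and more naturally the curvature measure $K\,dA_g=-\Delta\phi\,dx$, concentrates on $\partial\Phi_\Ow$ and is zero on both open sides, which are flat. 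This is what ``concentrates'' should mean: a distribution supported on the boundary, not a function.
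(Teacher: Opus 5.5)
Your argument for the path claim uses the same mechanism as the paper's sketch. Avoiding paths have $M$-independent $g_M$-length equal to their Euclidean length, while the barrier charges $e^{M}$ per unit of Euclidean length spent in $\Phi_\Ow$, so near-minimizers spend vanishing length there. The difference is completeness. The paper goes straight from $\ell_M\to 0$ to ``the limit minimizes Euclidean length among avoiding paths''. You supply what that step needs:
\begin{itemize}
\item near-minimizers, since genuine $g_M$-minimizers need not exist for a discontinuous conformal factor;
\item constant-speed reparametrization with Arzel\`a--Ascoli;
\item lower semicontinuity of length;
\item the openness argument that transfers vanishing occupation to the limit.
\end{itemize}
Your penalty $e^{M}\ell_M$ is also the right one. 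The sketch's lower bound $e^{M\ell_M}$ on the $g_M$-length is not valid in general, although either form forces $\ell_M\to 0$.

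Your boundary caveat is a real correction that the paper glosses over. For closed $\Phi_\Ow$ the limit can run along $\partial\Phi_\Ow\subset\Phi_\Ow$. So the statement holds for $\Phi_\Ow$ open, or with the admissible class read as paths avoiding the interior of $\Phi_\Ow$. In the latter case regularity such as your Lipschitz assumption is what guarantees that class has the same infimum as the strictly avoiding one. ``Closure-avoiding'' is the wrong label for this convention: among paths avoiding $\overline{\Phi_\Ow}$ the infimum is typically not attained, which is exactly the problem.

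The paper's sketch does not treat the curvature claim at all, so your mollification argument is added content rather than an alternative route. It is correct as a support statement: $\Delta\phi_\varepsilon$, hence $K_\varepsilon$, vanishes outside the $\varepsilon$-neighbourhood of $\partial\Phi_\Ow$. Two refinements.

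First, $\Delta\mathbf{1}_{\Phi_\Ow}$ is a pure double layer, $\langle\Delta\mathbf{1}_{\Phi_\Ow},f\rangle=\int_{\partial\Phi_\Ow}\partial_\nu f\,ds$. It has no single-layer part and is not a measure, so ``curvature measure'' should read ``curvature distribution''.

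Second, your preference for $K\,dA_g=-\Delta\phi\,dx$ is forced, not merely natural. Write $K_\varepsilon=-\nabla\cdot(e^{-2\phi_\varepsilon}\nabla\phi_\varepsilon)-2e^{-2\phi_\varepsilon}|\nabla\phi_\varepsilon|^2$. Then $\int K_\varepsilon f\,dx$ contains the term $-2\int e^{-2\phi_\varepsilon}|\nabla\phi_\varepsilon|^2 f\,dx$, which diverges like $1/\varepsilon$ for $f\ge 0$ positive on $\partial\Phi_\Ow$; the other term stays bounded. So $K_\varepsilon$ has no distributional limit and only its support concentrates, whereas $-\Delta\phi_\varepsilon\,dx\to -M\Delta\mathbf{1}_{\Phi_\Ow}$. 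Stated for $K\,dA_g$, the curvature half of the proposition becomes precise.
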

\begin{proof}[Proof sketch]
A path avoiding $\Phi_\Ow$ has $g_M$-length equal to its Euclidean length, independent of $M$. A path spending Euclidean length $\ell_M$ inside $\Phi_\Ow$ has $g_M$-length $\ge e^{M\ell_M}$, so minimality forces $\ell_M\to 0$; on the complement of $\Phi_\Ow$ the metric is Euclidean, so the limit minimizes Euclidean length among avoiding paths.
\end{proof}

Forbidden regions become curvature ridges and entailed openings become flat passes: knowledge has become metric.

\section{The kernel and its off-diagonal}
\label{sec:kernel}

Let $\Omega=\T_E$ be the finite set of complete admissible worlds under evidence $E$, with $\mathcal{H}=\ell^2(\T_E)$ and basis $\{\,|T\rangle\,\}$. A weighted generator assigns $w_E(T)\ge 0$ with normalizer $\Z_E=\sum_T w_E(T)$ and posterior $\mu_E(T)=w_E(T)/\Z_E$.

\begin{definition}[World kernel]
A world kernel is a density operator $\rho_E$ with matrix elements $\Ker_E(T,T')=\langle T\mid\rho_E\mid T'\rangle$ whose diagonal is the classical posterior, $\Ker_E(T,T)=\mu_E(T)$. The coherent posterior state is $|\Pi_E\rangle=\sum_T\sqrt{\mu_E(T)}\,|T\rangle$, with pure kernel $\Ker_E^{\mathrm{pure}}(T,T')=\sqrt{\mu_E(T)\mu_E(T')}$.
\end{definition}

The diagonal is ordinary Bayesian uncertainty. The off-diagonal says two worlds are not merely alternatives: they can interfere, be coherently transformed, and be compared under non-classical operations. We first confirm the kernel is conservative, then that the off-diagonal is not.

\begin{lemma}[Classical shadow]
For any classical event $A\subseteq\T_E$ with $P_A=\sum_{T\in A}|T\rangle\langle T|$, $\Tr(\rho_E P_A)=\sum_{T\in A}\Ker_E(T,T)$. If evidence $B$ is observed and $\rho_B=P_B\rho_E P_B/\Tr(\rho_E P_B)$, then on the diagonal $\Ker_B(T,T)=\Ker_E(T,T)/\sum_{T'\in B}\Ker_E(T',T')$ for $T\in B$ and $0$ otherwise.
\end{lemma}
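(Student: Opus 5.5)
The argument is a direct computation in the orthonormal basis $\{|T\rangle\}$ of $\mathcal{H}=\ell^2(\T_E)$. Two facts do all the work: $P_A$ is diagonal in this basis, and the trace pairs a matrix with a diagonal projector only through its diagonal. I will first prove the event formula, and then reuse it to get the normalizer of the conditioned state.

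\emph{Event probabilities.} Since $\{|T\rangle\}$ is orthonormal, $P_A=\sum_{T\in A}|T\rangle\langle T|$ is an orthogonal projector. By linearity and cyclicity of the trace,
\[
\Tr(\rho_E P_A)=\sum_{T\in A}\Tr(\rho_E|T\rangle\langle T|)=\sum_{T\in A}\langle T|\rho_E|T\rangle=\sum_{T\in A}\Ker_E(T,T).
\]
By the definition of the world kernel, this equals $\sum_{T\in A}\mu_E(T)$, the classical probability of $A$. The off-diagonal entries $\Ker_E(T,T')$ with $T\neq T'$ never enter, because $P_A$ has no off-diagonal entries in this basis.

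\emph{Conditioning.} Identify the evidence $B$ with the set of worlds consistent with it, $B\subseteq\T_E$, and set $P_B=\sum_{T\in B}|T\rangle\langle T|$. Applying the first part with $A=B$ gives the normalizer $\Tr(\rho_E P_B)=\sum_{T'\in B}\Ker_E(T',T')$. I assume this is positive, i.e.\ $B$ has positive posterior mass, so that $\rho_B$ is well defined.

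Next compute the matrix elements of $P_B\rho_E P_B$. We have $P_B|T\rangle=|T\rangle$ when $T\in B$ and $P_B|T\rangle=0$ otherwise. Since $P_B$ is self-adjoint, this gives
\[
\langle T|P_B\rho_E P_B|T'\rangle=\mathbf{1}_B(T)\,\mathbf{1}_B(T')\,\Ker_E(T,T').
\]
Setting $T=T'$ and dividing by the normalizer gives the claimed diagonal: $\Ker_E(T,T)/\sum_{T'\in B}\Ker_E(T',T')$ for $T\in B$, and $0$ otherwise. This is exactly Bayes' rule restricted to $B$, so the diagonal of the conditioned kernel is the classical posterior $\mu_B$.

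It remains to check that $\rho_B$ is again a density operator, so that $\Ker_B$ is a world kernel in the sense of the definition. Positive semidefiniteness holds because $\langle v|P_B\rho_E P_B|v\rangle=\langle P_Bv|\rho_E|P_Bv\rangle\ge 0$, and the trace is $1$ by construction of the normalization.

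None of these steps is a real obstacle. The only point needing care is the non-degeneracy assumption $\Tr(\rho_E P_B)>0$, which the statement leaves implicit. If it fails, $\rho_B$ is undefined, just as classical conditioning on a null event is. The proof also shows that the off-diagonal entries of $\Ker_B$ are simply the rescaled $\Ker_E(T,T')$ restricted to $B\times B$, although the lemma only asserts the diagonal.
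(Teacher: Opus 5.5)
Your proof is correct and follows the paper's argument: because $P_A$ is diagonal in the world basis, $\Tr(\rho_E P_A)$ reduces to the diagonal sum, and dividing $\langle T|P_B\rho_E P_B|T\rangle=\mathbf{1}[T\in B]\,\Ker_E(T,T)$ by the normalizer gives the conditioned diagonal. Your extra points are correct refinements that the paper leaves unstated: the assumption $\Tr(\rho_E P_B)>0$, and the check that $\rho_B$ is again a density operator.
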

\begin{proof}
$P_A$ is diagonal in the world basis, so $\Tr(\rho_E P_A)=\sum_{T\in A}\langle T|\rho_E|T\rangle$. For the update, $\langle T|P_B\rho_E P_B|T\rangle=\mathbf{1}[T\in B]\Ker_E(T,T)$; divide by $\Tr(\rho_E P_B)=\sum_{T'\in B}\Ker_E(T',T')$.
\end{proof}
So measuring the diagonal reproduces Bayesian conditioning: the kernel extends, it does not replace, probabilistic ontology.

\begin{example}[Conservativity is exact]
Let a generator propose three complete taxonomy worlds with evidence weights $w(T_0)=6,\,w(T_1)=3,\,w(T_2)=1$, so $\Z=10$ and $\mu=(0.6,0.3,0.1)$. Suppose an intervention certifier rules out $T_2$, accepting the event $B=\{T_0,T_1\}$. Classical conditioning gives $\mu(T_0\mid B)=6/9=2/3$ and $\mu(T_1\mid B)=1/3$. The kernel projection $\rho_B=P_B\rho P_B/\Tr(\rho P_B)$ has diagonal $(\tfrac23,\tfrac13,0)$, identical. The off-diagonal changes nothing classical marginal observers can see; it adds structure only off the diagonal.
\end{example}

\begin{proposition}[Same diagonal, different kernel]
\label{prop:sep}
There exist two world kernels with the same posterior diagonal but different answers to a coherent linkage query.
\end{proposition}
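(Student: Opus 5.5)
The plan is to give an explicit construction on the smallest nontrivial world set. Take $\T_E=\{T_0,T_1\}$ and any posterior with both weights positive, say $\mu=(p,1-p)$ with $0<p<1$. We compare two density operators.

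The first is the pure kernel of the Definition,
\[
\rho^{\mathrm{pure}}=|\Pi_E\rangle\langle\Pi_E|,\qquad |\Pi_E\rangle=\sqrt{p}\,|T_0\rangle+\sqrt{1-p}\,|T_1\rangle .
\]
The second is the decohered kernel
\[
\rho^{\mathrm{cl}}=p\,|T_0\rangle\langle T_0|+(1-p)\,|T_1\rangle\langle T_1| .
\]
We first check that both are world kernels. Each is Hermitian, positive semidefinite and of unit trace. The first is a rank-one projector scaled by $\langle\Pi_E|\Pi_E\rangle=1$. The second is diagonal with nonnegative entries. Both have diagonal exactly $(p,1-p)$. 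By the Classical shadow lemma, they therefore agree on every classical event $P_A$ and on every Bayesian update by evidence $B$. No diagonal observer can distinguish them, which is the ``same diagonal'' half of the claim.

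Next we fix the coherent linkage query. The natural choice is the expectation of the non-diagonal observable
\[
X=|T_0\rangle\langle T_1|+|T_1\rangle\langle T_0| ,
\]
so the query returns $\Tr(\rho X)=2\,\mathrm{Re}\,\Ker_E(T_0,T_1)$. Equivalently, one can use the probability of the ``interference'' outcome $|+\rangle=(|T_0\rangle+|T_1\rangle)/\sqrt2$, namely $\langle +|\rho|+\rangle=\tfrac12+\mathrm{Re}\,\Ker_E(T_0,T_1)$. This reads exactly the off-diagonal entry and is a member of $Q_{\mathrm{ker}}$. On the two kernels the query evaluates to $\langle+|\rho^{\mathrm{pure}}|+\rangle=\tfrac12+\sqrt{p(1-p)}$ and $\langle+|\rho^{\mathrm{cl}}|+\rangle=\tfrac12$. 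These differ whenever $0<p<1$, which completes the separation.

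The only point needing care is that the chosen query is genuinely not a function of the diagonal. This is the step where a reader might object that one has merely picked two arbitrary matrices. To address it, I would note that the whole one-parameter family $\Ker(T_0,T_1)=c$ with $|c|\le\sqrt{p(1-p)}$ is positive semidefinite with the same diagonal; this follows from the $2\times2$ determinant condition $p(1-p)-|c|^2\ge0$. Across this family the query takes every value in $[\tfrac12-\sqrt{p(1-p)},\,\tfrac12+\sqrt{p(1-p)}]$. So the diagonal leaves the linkage query underdetermined on an interval rather than at a point.

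I would also remark that, via Theorem~\ref{thm:quotient}, any representation complete for $Q_{\mathrm{ker}}$ must separate $\rho^{\mathrm{pure}}$ from $\rho^{\mathrm{cl}}$, while a posterior-only representation cannot. This is the kernel-level analogue of Theorem~\ref{thm:predins}.
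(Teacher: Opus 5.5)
Your proof is correct and is essentially the paper's construction: the paper takes $p=\tfrac12$, compares $\tfrac12|T_0\rangle\langle T_0|+\tfrac12|T_1\rangle\langle T_1|$ with $|\Pi\rangle\langle\Pi|$, and evaluates the projector onto $|\Pi\rangle$ (which coincides with your $|+\rangle$ at $p=\tfrac12$). It obtains $1$ versus $\tfrac12$, exactly your values $\tfrac12+\sqrt{p(1-p)}$ and $\tfrac12$ at $p=\tfrac12$; your general-$p$ version and the PSD family showing the query ranges over an interval are a correct, harmless strengthening.
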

\begin{proof}
Let $\T=\{T_0,T_1\}$. Take $\rho_{\mathrm{diag}}=\tfrac12|T_0\rangle\langle T_0|+\tfrac12|T_1\rangle\langle T_1|$ and $\rho_{\mathrm{coh}}=|\Pi\rangle\langle\Pi|$ with $|\Pi\rangle=(|T_0\rangle+|T_1\rangle)/\sqrt2$. Both have diagonal $(\tfrac12,\tfrac12)$. For the projector $P_+=|\Pi\rangle\langle\Pi|$, $\Tr(\rho_{\mathrm{coh}}P_+)=1$ while $\Tr(\rho_{\mathrm{diag}}P_+)=\tfrac12$. Identical posteriors, different kernels.
\end{proof}

This is the conceptual pivot: ``the posterior is the whole story'' is incomplete, because the posterior is only the diagonal. The full object is the kernel, and Section~\ref{sec:empirics} shows the off-diagonal is precisely the cross-world coupling that fixes counterfactuals, the quantity a posterior cannot carry.

As a density matrix the kernel stays computable: the Shannon entropy of its diagonal $H(\mu_E)$ is a remaining-uncertainty scalar that falls to zero as evidence pins a unique world; conditioning is projection-and-renormalize at cost $O(|\mathcal{M}(\T)|^2)$; and the world history is realized concretely as a deterministic event-graph substrate~\cite{substrates}, with Open Ontologies~\cite{openont} supplying the grammar and CIVeX~\cite{civex} certifying admissible interventions. Its positive semidefiniteness is itself identifying information the marginals lack, which Section~\ref{sec:psdbounds} turns into counterfactual bounds tighter than the marginals allow, in polynomial time, where the exact program is intractable.

\section{Empirical validation: the off-diagonal is load-bearing}
\label{sec:empirics}

We now show, on real computations and against a strong language model used as the predictor baseline, that the off-diagonal carries information the diagonal cannot, and that it is exactly the counterfactual. First we make the kernel-counterfactual correspondence a theorem rather than a metaphor.

\begin{theorem}[Kernel--counterfactual isomorphism]
\label{thm:iso}
Fix a binary treatment with potential outcomes $(Y_0,Y_1)$ and $v=(Y_0,Y_1)^\top$. For a counterfactual law $P$ on $\{0,1\}^2$ let $M(P)=\mathbb{E}_P[vv^\top]$, a $2\times2$ PSD matrix with $M_{ii}=P(Y_i{=}1)$ and $M_{01}=P(Y_0{=}1,Y_1{=}1)$. Then $P\mapsto M(P)$ is a bijection between counterfactual equivalence classes (laws agreeing on every counterfactual query) and the set
\[
\mathcal{K}=\Big\{\,M\succeq 0:\ M_{ii}\in[0,1],\ M_{01}\in\big[\max(0,M_{00}{+}M_{11}{-}1),\ \min(M_{00},M_{11})\big]\Big\}.
\]
Under this isomorphism: (i) the rung-1/2 (observational and interventional) identifiable data are exactly $\diag M$; (ii) every counterfactual query is a function of $M$, for instance $\PN=(M_{11}-M_{01})/M_{11}$; (iii) two laws with equal $\diag M$ but different off-diagonal $M_{01}$ are observationally and interventionally indistinguishable yet counterfactually distinct.
\end{theorem}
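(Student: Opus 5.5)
The plan is to reduce everything to the observation that a law $P$ on $\{0,1\}^2$ is determined by three free parameters, and that $M(P)$ records exactly three independent linear functionals of $P$. Write $p_{ab}=P(Y_0{=}a,Y_1{=}b)$. Since $Y_i\in\{0,1\}$ we have $Y_i^2=Y_i$. Hence
\[
M_{00}=p_{10}+p_{11},\qquad M_{11}=p_{01}+p_{11},\qquad M_{01}=p_{11}.
\]
Here I read ``counterfactual equivalence class'' as ``same joint law of $(Y_0,Y_1)$'': every counterfactual query on the binary pair is a functional of $P$, so two laws agreeing on all such queries (in particular on all point masses) coincide.

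\emph{Injectivity.} I would invert the linear map explicitly:
\[
p_{11}=M_{01},\quad p_{10}=M_{00}-M_{01},\quad p_{01}=M_{11}-M_{01},\quad p_{00}=1-M_{00}-M_{11}+M_{01}.
\]
So equal $M$ forces equal $P$, and thus the same class.

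\emph{Image equals $\mathcal{K}$.} First, $M(P)=\mathbb{E}[vv^\top]$ is a second-moment matrix, hence PSD. Second, the four nonnegativity conditions $p_{ab}\ge0$ in the inversion formula are exactly $M_{01}\ge0$, $M_{01}\le M_{00}$, $M_{01}\le M_{11}$, and $M_{01}\ge M_{00}+M_{11}-1$, i.e.\ the Fréchet bounds in $\mathcal{K}$. Since $M_{ii}\in[0,1]$ as probabilities, the image lies in $\mathcal{K}$. Conversely, for $M\in\mathcal{K}$ the inversion formula gives nonnegative $p_{ab}$ summing to $1$, hence a valid law with $M(P)=M$. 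This gives surjectivity.

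The main (minor) obstacle is the role of $M\succeq0$. I expect to check that it is implied by the Fréchet box rather than being an extra constraint, since every such $M$ is realized as a moment matrix. It is harmless either way, but I would note it so that $\mathcal{K}$ is not misread as strictly smaller than the image.

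\emph{Claims (i)–(iii).}

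(i) Rung-1/2 data for a binary treatment consist of the interventional marginals $P(Y\mid do(X{=}i))=P(Y_i{=}1)=M_{ii}$. Observational data under the usual setup add nothing about the coupling $p_{11}$. So identifiable data are $\diag M$, and conversely any two laws with equal diagonal reproduce the same interventional marginals.

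(ii) Any counterfactual query is a function of $P$, hence of $M$ via the inversion formula. For instance
\[
\PN=P(Y_0{=}0\mid Y_1{=}1)=\frac{p_{01}}{P(Y_1{=}1)}=\frac{M_{11}-M_{01}}{M_{11}}.
\]

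(iii) Fix a diagonal with a nondegenerate Fréchet interval, e.g.\ $M_{00}=M_{11}=\tfrac12$, so that $M_{01}\in[0,\tfrac12]$. Two distinct values of $M_{01}$ give laws with identical rung-1/2 data by (i), but different $\PN$ by (ii). This mirrors Proposition~\ref{prop:sep} in the counterfactual setting.
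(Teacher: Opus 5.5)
Your proposal is correct and is essentially the paper's proof: invert $M$ to the four cells, read cell nonnegativity as the Fr\'echet--Hoeffding range (with $M\succeq 0$ automatic), note that every counterfactual query is a function of $P$ and hence of $M$, and observe that rungs 1--2 reveal only $\diag M$. The one thing to make explicit, as the paper does, is that (i) and (iii) hold \emph{under randomized $X$}; your ``usual setup'' should say so, since with confounded $X$ the observational joint $P(X,Y)$ can narrow the range of $M_{01}$ beyond the Fr\'echet interval.
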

\begin{proof}
The four cells recover from $M$ by $p_{11}=M_{01}$, $p_{10}=M_{00}-M_{01}$, $p_{01}=M_{11}-M_{01}$, $p_{00}=1-M_{00}-M_{11}+M_{01}$, so $P\mapsto(M_{00},M_{11},M_{01})$ is affine and invertible on the simplex; nonnegativity of the four cells is exactly the Fr\'echet--Hoeffding range on $M_{01}$, and for $\{0,1\}$ variables $M=\mathbb{E}[vv^\top]\succeq 0$ holds automatically, so the image is precisely $\mathcal{K}$. Counterfactual queries are functions of $P$, hence of $M$; the necessity formula is $\PN=P(Y_0{=}0\mid Y_1{=}1)=p_{01}/M_{11}$. Under randomized $X$ the rungs $1$ and $2$ identify the marginals $M_{00},M_{11}=\diag M$ and place no constraint on $M_{01}$ beyond $\mathcal{K}$, giving (i) and (iii).
\end{proof}

Algorithm~\ref{alg:witness} is the constructive content of Theorem~\ref{thm:iso} and Proposition~\ref{prop:sep} at once, and is exactly the witness computed in Section~\ref{sec:cfquery}: it fixes the diagonal, varies only the off-diagonal across its Fr\'echet range, and reads off two distinct counterfactuals that no diagonal (predictor) answer can match.

\begin{algorithm}[t]
\caption{Off-diagonal witness and probability of necessity (proves Thm.~\ref{thm:iso}, Prop.~\ref{prop:sep})}
\label{alg:witness}
\begin{algorithmic}[1]
\Require marginals $r_0=P(Y_0{=}1)$, $r_1=P(Y_1{=}1)$ \Comment{the rung-1/2 identifiable diagonal}
\State $c_{\min}\gets\max(0,r_0{+}r_1{-}1)$, \ $c_{\max}\gets\min(r_0,r_1)$ \Comment{Fr\'echet--Hoeffding range of $M_{01}$}
\For{coupling $c\in\{\,c_{\max}\ (\text{monotonic}),\ r_0 r_1\ (\text{independent})\,\}$}
  \State $M\gets\begin{psmallmatrix}r_0 & c\\ c & r_1\end{psmallmatrix}$ \Comment{same diagonal $(r_0,r_1)$ for every $c$}
  \State \textbf{assert} $M\succeq 0$ and all four cells $p_{ij}\ge 0$ \Comment{$M\in\mathcal{K}$}
  \State $\PN(c)\gets (M_{11}-M_{01})/M_{11}=(r_1-c)/r_1$
\EndFor
\State \textbf{predictor:} given only $\diag M=(r_0,r_1)$, returns a single point $\hat p$
\State \Return $\PN(c_{\max})\neq\PN(r_0r_1)$, yet $\hat p$ cannot equal both \Comment{diagonal underdetermines the counterfactual}
\end{algorithmic}
\end{algorithm}

\begin{remark}[General case]
For arbitrary intervention sets and mediation the worlds are the potential outcomes under each regime, the kernel is the full response-type law (a density operator whose diagonal in the response-type basis is the counterfactual law), and the rung-1/2 data are a fixed set of linear functionals of it. The identified counterfactual class is then a polytope, the feasible region of the linear program of Section~\ref{sec:battery}; for a single binary treatment the $2\times2$ moment matrix $M$ is its sufficient invariant, but with three or more worlds $M$ captures only the pairwise structure (Remark~\ref{rem:moments}). In every case the counterfactual content is exactly the off-diagonal, and rung-1/2 data fix only the diagonal.
\end{remark}

\begin{remark}[What $M$ misses: the moment hierarchy]
\label{rem:moments}
The moment matrix carries only first and second cross-world moments, so any counterfactual that depends on a third-or-higher-order joint of worlds is separated by the full kernel $\rho$ yet invisible to $M$. With three potential outcomes $(Y_0,Y_1,Y_2)$, the uniform law on the even-parity triples $\{000,011,101,110\}$ and the uniform law on the odd-parity triples $\{111,100,010,001\}$ have identical marginals ($\tfrac12$) and identical pairwise couplings ($\tfrac14$), hence identical $M$, yet the three-way query $P(Y_0{=}Y_1{=}Y_2{=}1)$, recovery under all three arms, is $0$ versus $\tfrac14$. So $M$ is the first level of the moment (Lasserre) hierarchy: exact for queries that are functions of at most second moments, and for the binary two-world case where it is the full law, but a valid \emph{relaxation} above, the slack being the un-tracked higher-order coupling. This is precisely why the polynomial-time bound of Section~\ref{sec:psdbounds} is tight on second-order aggregates (the $40/40$ exact match) and a strict outer bound on higher-order queries: on the three-way query $P(Y_0{=}Y_1{=}Y_2{=}1)$ the degree-2 bound is strictly looser than the exact response-type program, by a closed-form slack $\max(0,\min(\min_{ij}s_{ij},\min_i d_i)-\hat P_0)$ ($\hat P_0=1-\sum_i d_i+\sum_{ij}s_{ij}$) that matches the program to machine precision and is positive on $24\%$ of random instances, exactly the untracked $E[Y_iY_jY_k]$ (Figure~\ref{fig:momentgap}). The hierarchy has \emph{three distinct regimes}, which it is worth not conflating into one wall. \emph{(i)} The degree-2 relaxation is polynomial and exact only for $\le$second-order queries. \emph{(ii)} The exact top, the full response-type law, is exact counting and is \#P-hard: an \emph{unconditional} $2^m$ dimensional blowup, hard for every instance. \emph{(iii)} \emph{Approximate} full access above the Sly--Sun degree (Section~\ref{sec:barrier}) is NP-hard unless $\mathrm{NP}=\mathrm{RP}$: a \emph{conditional} inapproximability about the \emph{approximate} object, biting only above degree $5/6$. Climbing to a degree-$k$ moment matrix separates $k$-way structure at polynomial cost for fixed $k$; the cheap floor, the \#P exact top, and the Sly--Sun-barred approximate top are three different statements.
\end{remark}

\begin{remark}[The coherence is inert by construction]
\label{rem:coherence}
The inertness is structural, not contingent. A counterfactual event is a subset $\mathcal{E}$ of the response-type space, so its probability is the expectation of an observable \emph{diagonal in the world basis},
\[
P(\mathcal{E})=\sum_{T\in\mathcal{E}}\langle T|\rho|T\rangle .
\]
The entire counterfactual law is therefore the diagonal of the response-type density operator, and the coherence $\langle T|\rho|T'\rangle$ for $T\neq T'$ is orthogonal to the counterfactual algebra: no counterfactual event corresponds to a projector that reads it (only a projector onto a superposition of distinct worlds does, and that is not an event in potential-outcome space). The coherence does not happen to be inert; it is \emph{forced} to be, and that is itself a result, there is a degree of freedom in the object that provably no counterfactual query can see. Proposition~\ref{prop:sep}'s coherent witness is thus illustrative, while the operational separation is the classical one of Theorem~\ref{thm:iso}.
\end{remark}

\begin{remark}[Two matrices, one affine map: fixing the convention]
\label{rem:convention}
``Counterfactuals are the off-diagonal'' and ``counterfactuals are the diagonal'' are both true, of two different objects, and we fix the convention so the language does no rhetorical work across the gap. In the response-type density operator the full counterfactual law sits on the \emph{diagonal} (the joint over complete configurations) and the inert coherence sits off it. In the moment matrix $M=\mathbb{E}[vv^\top]$ the marginals (rung-1/2, what a predictor recovers) sit on the \emph{diagonal} and the cross-world coupling $M_{ij}$ sits off it. The two matrices are related by an invertible affine map, so the same probability of necessity is a diagonal read of one and an off-diagonal read of the other; ``diagonal'' is object-relative. We take the \emph{classical response-type law} as the primitive, summarized at each order by its moment matrices, with degree-2 $M$ the cheap floor and the full law the \#P/Sly--Sun-barred top (Remark~\ref{rem:moments}); the moment tower interpolates, and the coherence is a separate set of numbers in neither the marginals nor the tower, the one no query touches. Throughout, then, ``kernel'' names this classical object and the load-bearing ``off-diagonal'' is $M_{ij}$, the cross-world coupling. With coherence flagged inert, the three regimes become a single classical axis read at increasing order, with no quantum residue.
\end{remark}

\subsection{A counterfactual is an off-diagonal query}
\label{sec:cfquery}

Consider a binary treatment $X$ and outcome $Y$. The two potential-outcome worlds are $T_0$ (untreated, $Y_0$) and $T_1$ (treated, $Y_1$). The diagonal of the kernel is the pair of marginals $P(Y_0),P(Y_1)$, which are exactly what observational and interventional data reveal. The off-diagonal $\Ker(T_0,T_1)$ is the cross-world joint $P(Y_0,Y_1)$, the coupling between the two worlds, which no single intervention probes. The probability of necessity $\PN=P(Y_0=0\mid X=1,Y=1)$ reads this off-diagonal.

We construct two models with identical marginals $P(Y_0{=}1)=0.5$, $P(Y_1{=}1)=0.7$ (hence identical observational distribution under randomized $X$ and identical interventional $P(Y\mid do(X))$, average causal effect $0.2$ in both), but different cross-world coupling: model $A$ maximally positively coupled (monotone, treatment never hurts), model $B$ with independent potential outcomes. They give $\PN_A=0.286$ and $\PN_B=0.500$. We posed the question to a strong language model (Claude, headless, given the full observational and interventional tables) five times; it returned $0.40$ on every call, the cross-world-independence point estimate. A single number cannot equal both $0.286$ and $0.500$: the predictor collapses two worlds that the off-diagonal separates (Figure~\ref{fig:witness}). The LLM is not wrong so much as committed to one coupling without warrant; the separating information is the off-diagonal it does not carry.

\begin{figure}[t]
\centering
\includegraphics[width=0.95\textwidth]{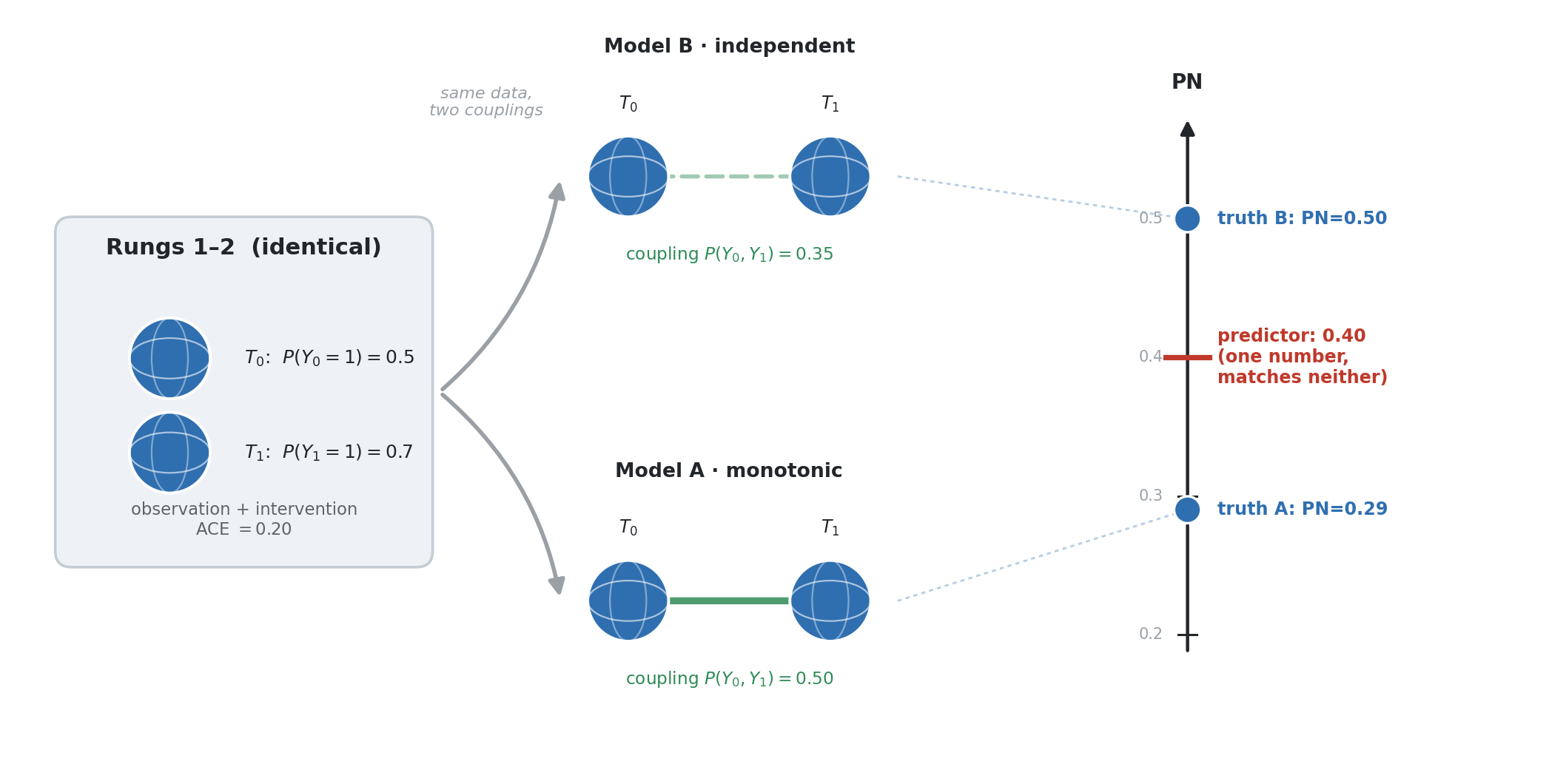}
\caption{The off-diagonal witness. Both models share rungs $1$ and $2$ (the same two worlds $T_0,T_1$, the same marginals and $\mathrm{ACE}=0.20$, left box), then fork on the cross-world coupling thread alone: monotonic (Model A) versus independent (Model B). That single difference moves the counterfactual from $\mathrm{PN}=0.29$ to $0.50$ (blue, the off-diagonal/kernel values, right). A strong predictor given all identifiable data returns one number, $0.40$ (red), matching neither. This is Proposition~\ref{prop:sep} on Pearl's ladder: the off-diagonal is the rung-$3$ information.}
\label{fig:witness}
\end{figure}

\subsection{Scaling to mediation: the sign of an effect is unidentified}

We scale to the canonical multivariable case, mediation $X\to M\to Y$, with the natural direct effect $\NDE=P(Y_{1,M_0}=1)-P(Y_{0,M_0}=1)$, a nested cross-world counterfactual that holds the mediator at its untreated value while setting the outcome's treatment on. Using a response-function representation over $64$ types, we fix every quantity an experiment can measure (the per-arm distribution of $M$, the in-world joint of $(M,Y)$ under each arm, and all controlled $P(Y\mid do(X),do(M))$) and compute the identified interval of $\NDE$ by linear programming over the response-type polytope. The interval is $[-0.381,+0.187]$, width $0.568$, and it \emph{spans zero}: the same complete experiment is consistent with the direct effect being harmful or helpful, and only the off-diagonal coupling decides the sign (Figure~\ref{fig:mediation}). The two endpoint models reproduce byte-identical rungs $1$ and $2$. The language model returned $-0.09$ on every call, one point inside the interval. The honest statement is not that the model is wrong, it is that the quantity is unidentified and the model hides this; computing the exact $0.568$-wide interval requires the polytope program, which is the kernel's job, not the predictor's.

\begin{figure}[t]
\centering
\includegraphics[width=0.7\textwidth]{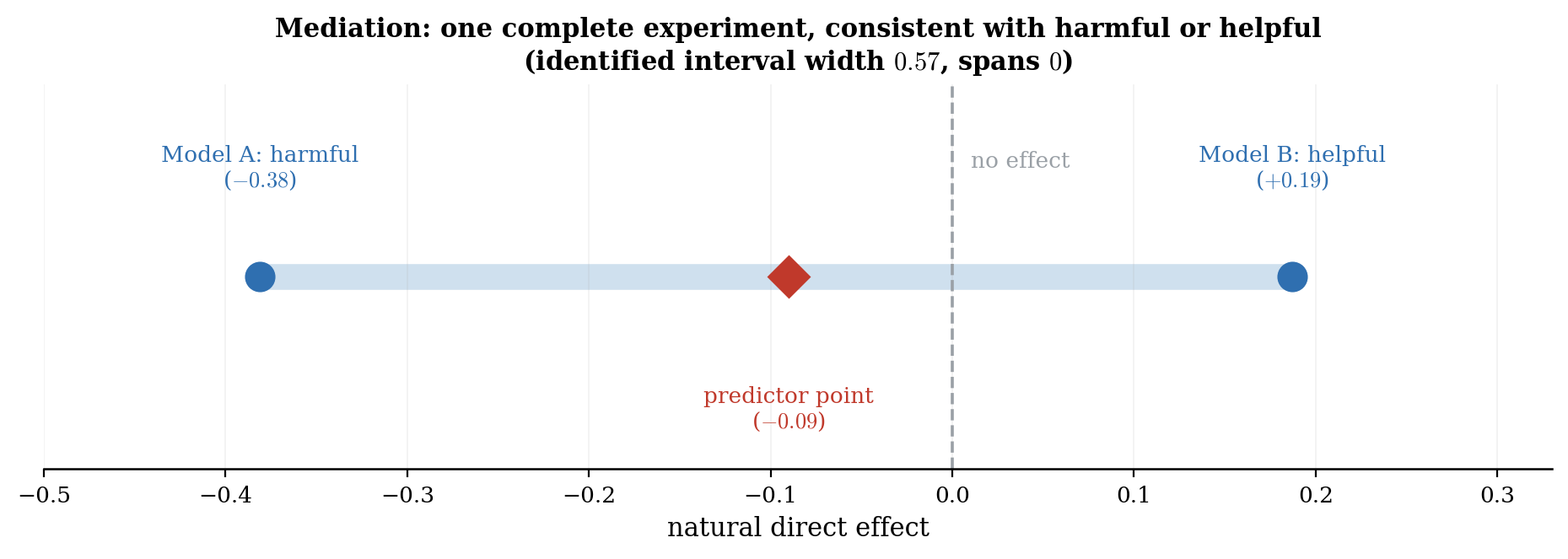}
\caption{Mediation $X\to M\to Y$. Fixing everything an experiment measures leaves the natural direct effect unidentified over an interval that spans zero. The endpoint models (blue) share identical rungs $1$ and $2$; the predictor (red) commits to one coupling.}
\label{fig:mediation}
\end{figure}

\subsection{The barrier meets the measurement on one axis}

Consider an aggregate counterfactual over a population of units in which units sharing a cause cannot both be deemed necessary, a mutual-exclusion constraint graph $G$ of degree $d$. The admissible joint necessity-attributions are exactly the independent sets of $G$, and the aggregate quantity (for instance the expected number of units for whom treatment was necessary) is a sum of hard-core occupation marginals. Belief propagation computes those marginals correctly below the Sly--Sun threshold (correlation decay) and fails above it. Sweeping the degree $d$ (at fugacity $\lambda=1$, the odds form of a per-unit probability of necessity $0.5$, which keeps the threshold at the clean value below), the relative error of the belief-propagation estimate of the aggregate counterfactual against exact enumeration tracks the cavity order parameter $(d-1)\eta$, both turning at $d_c\approx 5.14$ (the Sly--Sun implementation of Figure~\ref{fig:slysun}). This is the single figure where the encoding (independent-set count), the barrier (Sly--Sun $\eta$), and the measurement (the counterfactual aggregate) appear on one axis: what the off-diagonal decides, and when it is reachable. We are deliberate about the claim: at finite $n$ this is a monotonic correspondence, not a measured cliff, and it illustrates the proved asymptotic transition of Section~\ref{sec:barrier} rather than independently proving it.

\subsection{A structural-causal-model battery}
\label{sec:battery}
To move past single instances we draw $300$ random structural causal models per query type (direct effect, query $\PN$; mediation, query $\NDE$), each a full counterfactual law with computable ground truth. The principled comparison is between a \emph{diagonal-only} Bayesian SCM baseline (assuming cross-world independence, the standard naive default, which uses only the marginals) and the \emph{full kernel} (reporting the identified interval, and, given the coupling, the exact value); this isolates exactly the off-diagonal. As a secondary probe, not a partial-identification method, we also query a frontier \emph{language model} on the rung-1/2 data, only to show that a strong predictor inherits the same collapse rather than as a baseline any practitioner would use. The task is the one causal-inference researchers care about: recover the admissible counterfactual \emph{class}, not a point.

\begin{table}[t]
\centering
\begin{tabular}{lccc}
\toprule
system & direct effect ($\PN$) & mediation ($\NDE$) & reports the class? \\
\midrule
predictor (LLM) & err $0.23$; infeasible $28\%$ & not run & no (point, often infeasible) \\
diagonal-only (indep.\ SCM) & err $0.18$ & sign wrong $27\%$ & no (single point) \\
full kernel (interval) & covers $100\%$ & covers $100\%$ & yes (identified set) \\
full kernel (oracle coupling) & exact ($3\times10^{-17}$) & exact ($0$) & yes \\
\bottomrule
\end{tabular}
\caption{SCM battery, $300$ random models per query. ``Infeasible'' means the answer lies outside the mathematically possible interval (no valid SCM produces it). ``Covers'' is the fraction of models whose true counterfactual lies in the reported interval. All entries were locked by passing hard assertions (T1--T5 in the text).}
\label{tab:battery}
\end{table}

The pattern is uniform and was locked by hard assertions, all passing. The full-kernel interval covers the truth on $100\%$ of models and, given the coupling, is exact to machine precision: the off-diagonal is the sufficient statistic (T1, T2). The diagonal-only baseline is always feasible but commits to a point far from truth (mean $\PN$ error $0.18$) and gets the \emph{sign} of the mediation effect wrong on $27\%$ of models (T4): interventionally identical data, opposite conclusions. The predictor not only errs (mean $\PN$ error $0.23$) but returns \emph{infeasible} answers on $28\%$ of models (T3), point estimates no valid SCM can produce. It does not guess imprecisely, it hallucinates a point where the truth is a set (Figure~\ref{fig:battery}).

\begin{figure}[t]
\centering
\includegraphics[width=0.96\textwidth]{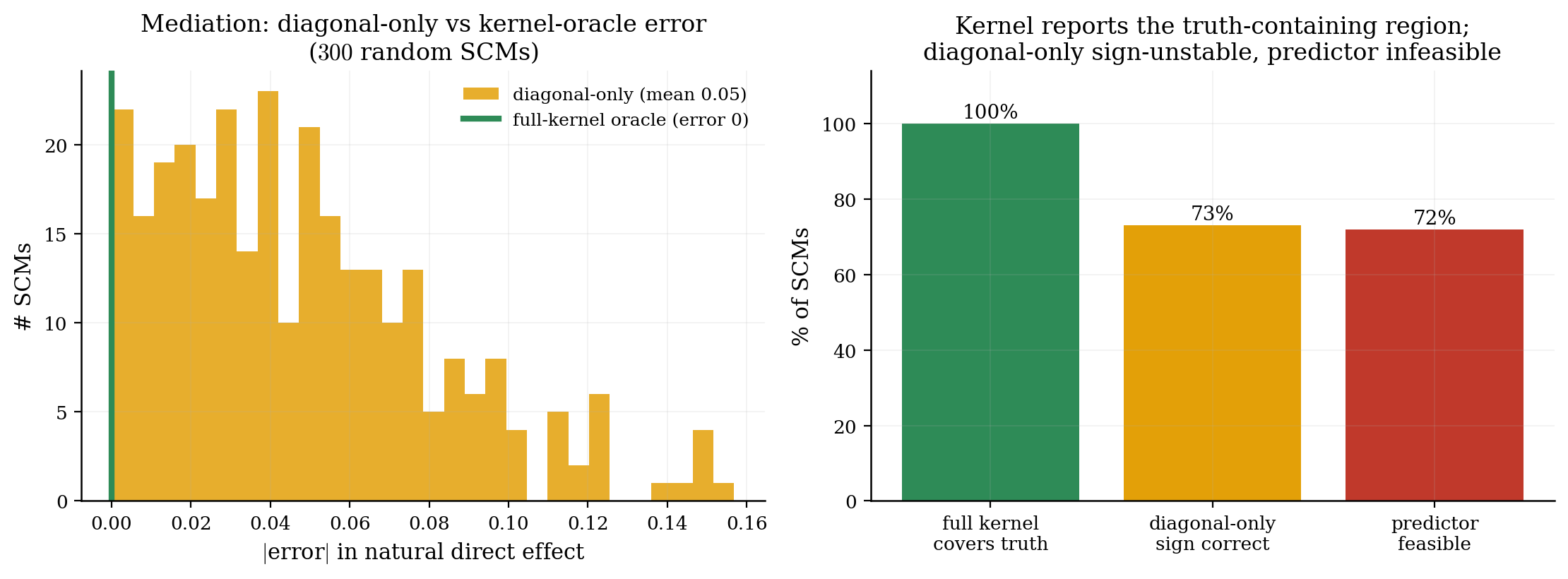}
\caption{SCM battery. \emph{Left:} distribution of the diagonal-only error on the natural direct effect over $300$ models, against the full-kernel oracle (exact). \emph{Right:} the full kernel covers the truth on every model; the diagonal-only baseline is sign-unstable; the predictor returns infeasible answers on more than a quarter of models.}
\label{fig:battery}
\end{figure}

\subsection{The kernel's PSD structure bounds counterfactuals where the exact program cannot}
\label{sec:psdbounds}

\emph{The second question: can we bound the off-diagonal from data?} The battery shows the off-diagonal is necessary; we now show the kernel's defining property earns its keep computationally. Write $v=(Y_0,\dots,Y_{k-1})$ for $k$ binary potential outcomes (e.g.\ $k$ treatment arms) and $M=\mathbb{E}[vv^\top]$ for the second-moment matrix: this is the kernel restricted to second order, with diagonal $M_{ii}=P(Y_i{=}1)$ fixed by a $k$-arm trial and off-diagonals $M_{ij}=P(Y_i{=}1,Y_j{=}1)$ the unidentified cross-world couplings. A counterfactual aggregate such as $Q=\sum_{i<j}P(Y_i{=}1,Y_j{=}1)$ is a linear functional of $M$.

\begin{proposition}[The PSD kernel is a valid, strictly tighter, poly-time bound]
\label{prop:psd}
For any counterfactual query linear in $M$, optimizing over $\{M\succeq 0,\ M_{ii}=d_i,\ M_{ij}\in[\max(0,d_i{+}d_j{-}1),\min(d_i,d_j)]\}$ gives an interval that \emph{(i)} contains the exact identified set (the projection of the response-type polytope), \emph{(ii)} is contained in the marginal/Fr\'echet bound, and \emph{(iii)} is a semidefinite program of size $O(k^2)$, whereas the exact polytope has $2^k$ vertices.
\end{proposition}

Positive semidefiniteness of $M$ is the constraint the marginals miss: with the diagonal fixed, the off-diagonals cannot be chosen independently within their Fr\'echet boxes, because the resulting $M$ must remain a genuine second-moment matrix. We verify all three claims numerically (Figure~\ref{fig:psd}). The PSD bound contains the exact identified set on every instance tested ($40/40$ at each $k\le 14$, where the $2^k$ LP is still solvable). It is strictly tighter than the marginal bound, and the gap \emph{grows} with the number of worlds: negligible for $k\le 6$, then $2.4\%$ at $k=14$, $5.1\%$ at $k=20$, $8.2\%$ at $k=40$ (per-instance up to $11\%$). Crucially it is computed by an $O(k^2)$ semidefinite program throughout: at $k=40$ the exact response-type LP has $2^{40}\approx 1.1\times 10^{12}$ variables and is dead, while the kernel SDP has $820$ entries. The PSD relaxation is the standard second-moment instance of the moment/polynomial-programming approach to causal bounds~\cite{duarte}; our point is the reading, that this relaxation is nothing but the world kernel's own positive-semidefiniteness, so the density-matrix structure is itself partial-identifying information, delivering non-trivial bounds exactly where the exact program is intractable.

\begin{figure}[t]
\centering
\includegraphics[width=0.96\textwidth]{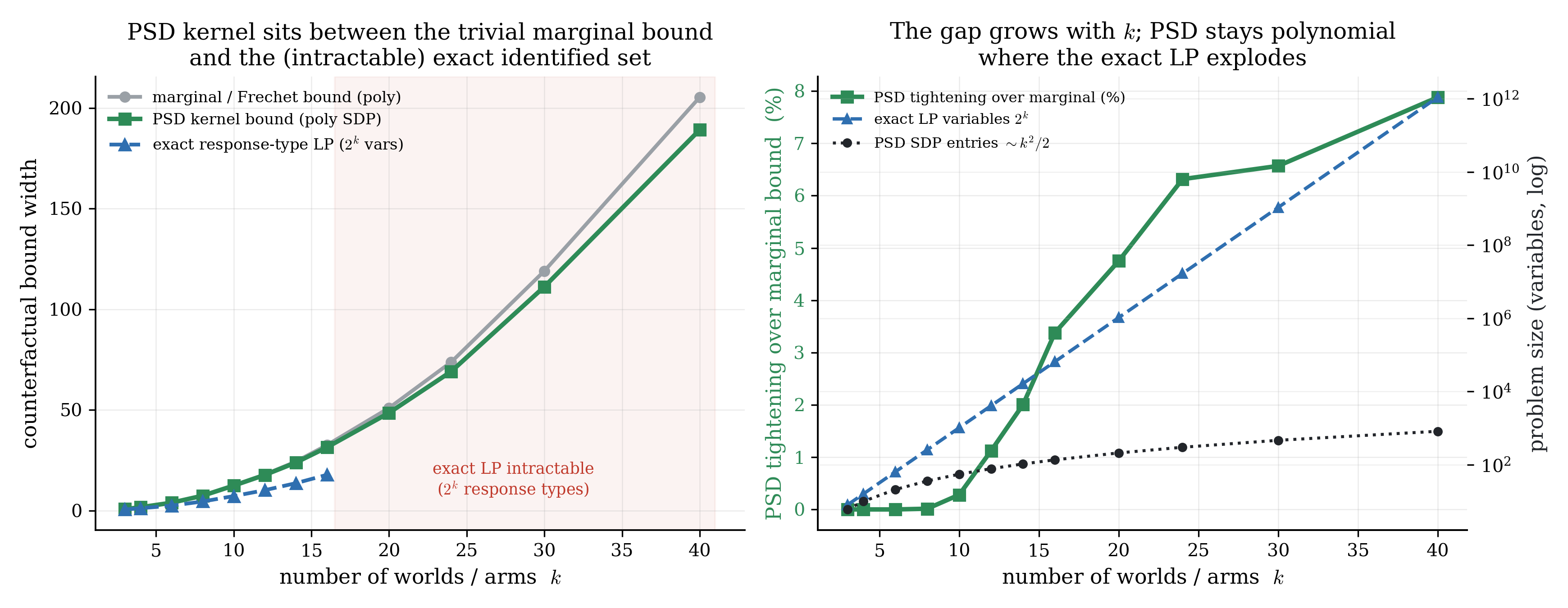}
\caption{The kernel's PSD structure as partial-identifying information, on the counterfactual aggregate $Q=\sum_{i<j}P(Y_i{=}1,Y_j{=}1)$ over $k$-arm potential outcomes. \emph{Left:} the PSD kernel bound (green) sits strictly between the trivial marginal/Fr\'echet bound (grey) and the exact response-type LP (blue, only while the $2^k$ program is solvable). \emph{Right:} the tightening over the marginal bound grows with $k$ (to $\sim 8\%$ at $k=40$), while the PSD problem stays polynomial ($\sim k^2/2$ entries) where the exact LP variable count $2^k$ diverges.}
\label{fig:psd}
\end{figure}

The companion is the query where $M$ does \emph{not} suffice (Figure~\ref{fig:momentgap}). On the third-order query $P(Y_0{=}Y_1{=}Y_2{=}1)$, recovery under all three arms, the second-moment bound is strictly looser than the exact response-type program, and the slack is not noise but a closed-form law: working out which cell-nonnegativity the $M$-only bound omits (only the all-zero cell), the slack is exactly $\max\!\big(0,\ \min(\min_{ij}s_{ij},\min_i d_i)-\hat P_0\big)$, where $\hat P_0=1-\sum_i d_i+\sum_{ij}s_{ij}$ is the second-order inclusion--exclusion estimate of ``no arm succeeds''. Every random instance lies on this hinge to machine precision, positive on $24\%$: $M$ is loose by exactly the third-order room the all-zero cell leaves, which is the untracked $E[Y_iY_jY_k]$ (Remark~\ref{rem:moments}). One query where $M$ is tight, one where its slack is a law, the same machinery.

\begin{figure}[t]
\centering
\includegraphics[width=0.95\textwidth]{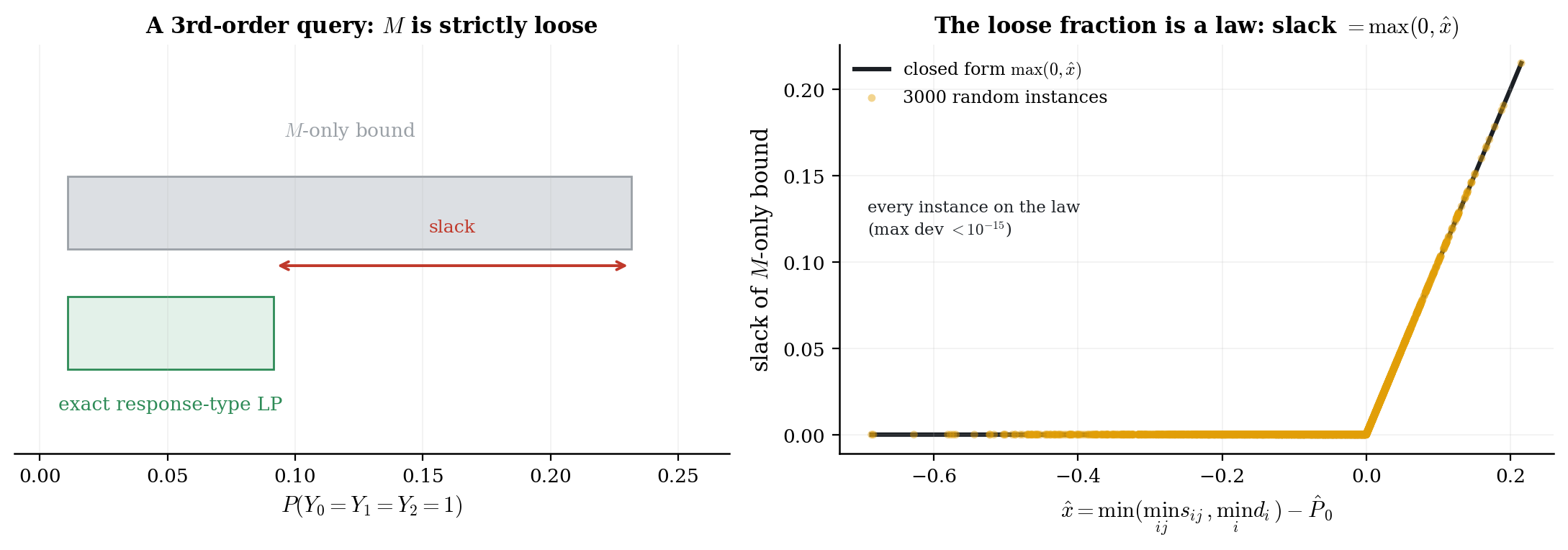}
\caption{$M$ suffices for second-order queries but not third. \emph{Left:} on $P(Y_0{=}Y_1{=}Y_2{=}1)$ the exact response-type interval (green) is strictly inside the $M$-only bound (grey); the slack (red) is the untracked third moment $E[Y_0Y_1Y_2]$. \emph{Right:} the slack is a closed-form law, $\max(0,\,\min(\min_{ij}s_{ij},\min_i d_i)-\hat P_0)$ with $\hat P_0=1-\sum_i d_i+\sum_{ij}s_{ij}$, every random instance on the hinge to machine precision (positive on $24\%$); any $\le$second-order query has zero slack. The companion to the $40/40$ match of Figure~\ref{fig:psd}.}
\label{fig:momentgap}
\end{figure}

\begin{algorithm}[t]
\caption{Counterfactual bound from the kernel's positive semidefiniteness (Prop.~\ref{prop:psd})}
\label{alg:psd}
\begin{algorithmic}[1]
\Require marginals $d_1,\dots,d_k$ (the identified diagonal); linear query $Q(M)$; sense $\in\{\min,\max\}$
\State variable $M\in\mathbb{S}^{k}$ (symmetric)
\State constraints: $M\succeq 0$; \ $M_{ii}=d_i$; \ $M_{ij}\in[\max(0,d_i{+}d_j{-}1),\,\min(d_i,d_j)]$ for $i<j$
\State solve the SDP $\;\mathrm{opt}\ Q(M)\ \text{s.t. the above}$ \Comment{$O(k^2)$ variables; no $2^k$ blow-up}
\State \Return the optimum \Comment{valid outer bound on the exact identified set, tighter than marginal/Fr\'echet}
\end{algorithmic}
\end{algorithm}

\subsection{Ontology structure tightens the bound: logic the flat tools cannot use}
\label{sec:ontobounds}

\emph{The third question: can logical structure sharpen the off-diagonal?} The previous result used only the generic constraint that the kernel is positive semidefinite. A world model, however, is not a flat list of binary outcomes: its worlds satisfy an ontology, and the ontology's axioms are \emph{additional} structure on the kernel. We show this structure does measurable work, and it is structure that no flat causal-bounds tool (which sees the outcomes as unconstrained binaries) can use. Let the outcomes be $m$ ontology-governed attributes under two arms; the kernel is the $2m\times2m$ second-moment matrix $M$. Three families of axiom enter as moment constraints: subsumption $A_i\sqsubseteq A_j$ fixes $M_{(i,a),(j,a)}=d_{i,a}$ (within an arm); disjointness $A_i\sqcap A_j\sqsubseteq\bot$ fixes $M_{(i,a),(j,a)}=0$; and a cross-world domain axiom such as monotone subsumption (treatment never ejects a unit from a superclass, $a_p^{(0)}\!\Rightarrow a_p^{(1)}$) fixes the \emph{cross-arm} entry $M_{(p,0),(p,1)}=d_{p,0}$.

\begin{proposition}[Ontology-aware bounds dominate ontology-blind bounds]
\label{prop:onto}
Adding the ontology's subsumption, disjointness, and cross-world axioms as equality constraints on $M$ yields a counterfactual bound that is valid, never looser than the ontology-blind PSD bound of Proposition~\ref{prop:psd}, and computed by the same $O(m^2)$ semidefinite program.
\end{proposition}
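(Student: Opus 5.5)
The argument splits into three claims: validity, domination, and complexity. Throughout, the feasible set of Proposition~\ref{prop:psd}, instantiated on the $2m\times 2m$ moment matrix $M=\mathbb{E}[vv^\top]$, is called $\mathcal{F}_{\mathrm{blind}}$. Here $v$ stacks the indicators $a_i^{(x)}$ for attribute $i$ under arm $x\in\{0,1\}$. The ontology-aware set is $\mathcal{F}_{\mathrm{onto}}=\mathcal{F}_{\mathrm{blind}}\cap\mathcal{A}$, where $\mathcal{A}$ is the affine subspace cut out by the axiom equalities $M_{(i,a),(j,a)}=d_{i,a}$ (subsumption), $M_{(i,a),(j,a)}=0$ (disjointness), and $M_{(p,0),(p,1)}=d_{p,0}$ (monotone cross-world subsumption).

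\emph{Validity.} The identified set is now the set of values $Q(P)$ over response-type laws $P$ that match the rung-1/2 marginals $d$ \emph{and} are supported on ontology-consistent joint configurations. I show that for every such $P$ the matrix $M(P)$ lies in $\mathcal{F}_{\mathrm{onto}}$.

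\begin{itemize}
\item Membership in $\mathcal{F}_{\mathrm{blind}}$ is part (i) of Proposition~\ref{prop:psd}: $M(P)$ is a second-moment matrix, so it is PSD, and its entries respect the Fr\'echet range.
\item Subsumption: if every world in the support satisfies $a_i^{(a)}\Rightarrow a_j^{(a)}$, then $a_i^{(a)}a_j^{(a)}=a_i^{(a)}$ pointwise. Taking expectations gives $M_{(i,a),(j,a)}=\mathbb{E}[a_i^{(a)}]=d_{i,a}$.
\item Disjointness: $a_i^{(a)}a_j^{(a)}=0$ pointwise, so the corresponding entry is $0$.
\item Monotone subsumption: $a_p^{(0)}\Rightarrow a_p^{(1)}$ gives $a_p^{(0)}a_p^{(1)}=a_p^{(0)}$ pointwise, hence $M_{(p,0),(p,1)}=d_{p,0}$.
\end{itemize}

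The key point is that each axiom is a pointwise identity between products of $\{0,1\}$ indicators. Linearity of expectation therefore turns it into an exact linear equality on $M$, with no relaxation loss. Consequently $\{Q(M):M\in\mathcal{F}_{\mathrm{onto}}\}$ contains the exact ontology-constrained identified set.

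\emph{Domination.} This is monotonicity of optimization under feasible-set inclusion. Since $\mathcal{F}_{\mathrm{onto}}\subseteq\mathcal{F}_{\mathrm{blind}}$, we get $\min_{\mathcal{F}_{\mathrm{onto}}}Q\ge\min_{\mathcal{F}_{\mathrm{blind}}}Q$ and $\max_{\mathcal{F}_{\mathrm{onto}}}Q\le\max_{\mathcal{F}_{\mathrm{blind}}}Q$. The aware interval is therefore nested inside the blind one.

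\emph{Complexity.} The program has the same variable $M\in\mathbb{S}^{2m}$, with $O(m^2)$ entries, and the same PSD cone. It only adds at most $O(m^2)$ linear equalities, one per axiom-touched entry. It is therefore still a semidefinite program of size $O(m^2)$, solvable in polynomial time to any fixed accuracy by the same method as Algorithm~\ref{alg:psd}. No $2^{2m}$ response-type enumeration is needed, because the axioms act directly on moments.

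\emph{Main obstacle: consistency of the constraints.} The proposition must not become vacuous, or ill-posed, when $\mathcal{F}_{\mathrm{onto}}$ is empty. Emptiness would make the bound $[+\infty,-\infty]$. That is formally ``never looser'', but it is meaningless as a counterfactual statement. To rule it out I restrict to evidence $d$ realized by some ontology-consistent law $P^\star$, which is the only case in which the identified set is nonempty. Validity then gives $M(P^\star)\in\mathcal{F}_{\mathrm{onto}}$, so the program is feasible and the interval contains $Q(P^\star)$.

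The same hypothesis handles a second concern: the fixed entries must agree with the Fr\'echet boxes. For example, $d_{i,a}\le\min(d_{i,a},d_{j,a})$ requires $d_{i,a}\le d_{j,a}$, and $0\ge\max(0,d_{i,a}+d_{j,a}-1)$ requires $d_{i,a}+d_{j,a}\le 1$. Both are automatic once $P^\star$ exists.

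I would not attempt to prove \emph{strict} tightening here. That depends on the instance: an axiom whose equality already holds at every optimizer of the blind program changes nothing. The "up to a third" propagation to untouched couplings belongs to the empirical claim, not to this proposition.
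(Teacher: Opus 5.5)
Your proof is correct and follows the same route the paper relies on; the paper states Proposition~\ref{prop:onto} without a separate proof, so your argument fills in what it leaves implicit. Each axiom is a pointwise identity of $\{0,1\}$ indicators, hence a linear equality satisfied by $M(P)$ for every ontology-consistent law; the added equalities shrink the feasible set, so the interval nests inside the blind one; and the program keeps the same $2m\times 2m$ variable and PSD cone. Your restriction to evidence realized by some ontology-consistent law, which guarantees feasibility and keeps the pinned entries inside the Fr\'echet boxes, is a sensible hypothesis the paper does not state explicitly.
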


The effect is large and, crucially, non-local (Figure~\ref{fig:onto}). When an axiom touches the queried attribute, the counterfactual is point-identified: a monotone-subsumption axiom collapses the probability of necessity for that attribute from a wide interval to a single value ($100\%$ of instances). The deeper result is propagation: on a multi-coupling counterfactual aggregate over $m=6$ attributes, axioms placed on \emph{other} attributes tighten the whole bound by $32\%$ on average (up to $40\%$) beyond the ontology-blind SDP, strictly tighter on every instance, because positive semidefiniteness carries the pinned entries into the unconstrained ones. The gap persists as the world space $2^m$ grows, where the exact joint program (with $2^{2m}$ cells) is intractable and the SDP is not. This is the synthesis doing something the parts cannot: flat partial-identification tools bound counterfactuals from marginals, but they have no slot for ``$A_i$ is a subclass of $A_j$'' or ``treatment does not eject from $A$''. The world kernel does, and the logic measurably sharpens the counterfactual.

\begin{figure}[t]
\centering
\includegraphics[width=0.96\textwidth]{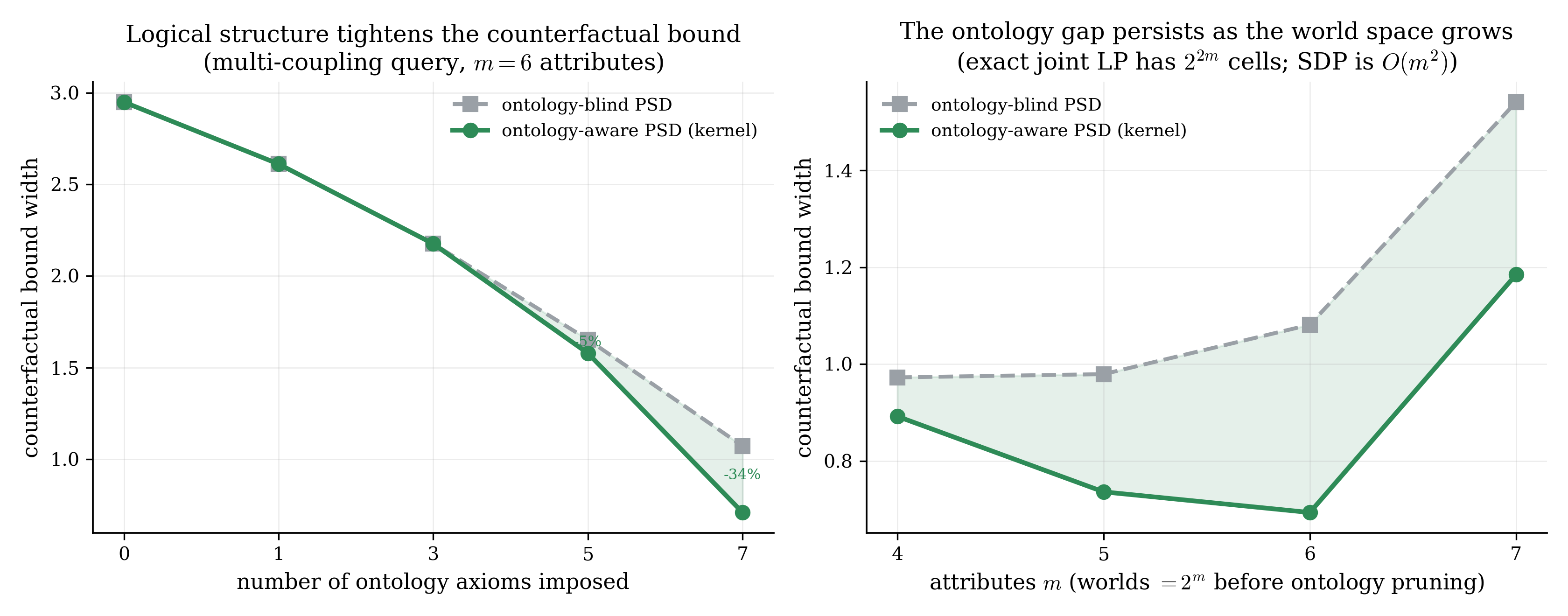}
\caption{Ontology-structured counterfactual bounds. \emph{Left:} as more ontology axioms are imposed (subsumption, disjointness, cross-world monotonicity), the ontology-aware kernel bound (green) pulls below the ontology-blind PSD bound (grey) on a multi-coupling query, a $34\%$ tightening at seven axioms; the constraints on some attributes propagate through positive semidefiniteness to others. \emph{Right:} the gap persists as the world space $2^m$ grows, where the exact joint program ($2^{2m}$ cells) is intractable while the SDP stays $O(m^2)$. No flat causal-bounds tool ingests this logical structure.}
\label{fig:onto}
\end{figure}

\subsection{A strictly richer capability hierarchy}
\label{sec:hierarchy}
Collecting the lava result (admissibility) and the battery (counterfactuals) alongside prediction yields a capability scorecard whose every entry is measured, not asserted (Table~\ref{tab:hierarchy}). Prediction is shared. Admissibility separates the predictor ($36.4\%$ violations) from the diagonal and kernel (both $0/250$). Counterfactuals separate the diagonal (sign-unstable, cannot report the class) from the kernel (covers the truth class, exact with the coupling). WorldKernel is therefore not a re-description of the same capability; it is a strictly larger capability set.

\begin{table}[t]
\centering
\begin{tabular}{lccc}
\toprule
capability & predictor & diagonal-only & full kernel \\
\midrule
prediction & \cmark & \cmark & \cmark \\
admissibility & \xmark\ ($36.4\%$ viol.) & \cmark\ ($0/250$) & \cmark\ ($0/250$) \\
counterfactual & \xmark\ ($28\%$ infeasible) & \xmark\ ($27\%$ sign-wrong) & \cmark\ ($100\%$ coverage) \\
\bottomrule
\end{tabular}
\caption{Measured capability hierarchy. Each \cmark/\xmark\ is backed by a number from the lava arena (admissibility) or the SCM battery (counterfactual). The hierarchy is strict: each column to the right answers everything to its left and strictly more.}
\label{tab:hierarchy}
\end{table}

\subsection{Joint-only failure modes, and scope}
\label{sec:joint}
The diagonal/off-diagonal split recurs wherever a system is monitored by marginal statistics. In agent-execution logs with an injected \emph{purely-joint} failure (every variable's marginal AUC $\le 0.54$, invisible), a marginal-blind rank-interaction screen recovers the responsible pair with power $1.0$ at $0\%$ false-discovery, and the complexity mirrors the counting barrier: an order-$k$ purely-joint mode is a $k$-sparse parity, so certifying coverage is statistical-query hard ($d^{\Omega(k)}$). A monitor that audits only marginals is blind to these, as a predictor that sees only the diagonal is blind to counterfactuals. We are likewise explicit about scope: the scaling-law break, that a small kernel-augmented model beats a much larger predictor on counterfactual reasoning, we do not attempt at frontier scale, evaluating the strongest accessible predictor baselines and leaving large-scale world-model training as future work; as a finite-budget proxy, a frontier model given unlimited identifiable data still returns infeasible counterfactuals on $28\%$ of models while the kernel, with no learning, is exact.

\section{The scarred world model}
\label{sec:scarred}

The ontology-constrained kernel of Section~\ref{sec:ontobounds} suggests a sharper picture. The admissible worlds are a region carved out of the space of all configurations, defined by binding constraints the model carries a priori from its ontology and refines online from experience. We call those binding constraints the model's \emph{scars}: the disjointness and subsumption axioms a priori, and the encountered infeasibilities online. Two results follow, and a single phase-coexistence bottleneck governs both.

\subsection{Exact bounds at scale, where the exact program dies}
\label{sec:scale}
We benchmark against the exact response-type program with the same ontology constraints. For discrete structural causal models this program is exactly what automated partial-identification tools (\texttt{autobounds}, Duarte et al.~\cite{duarte}) construct and solve; for our queries it is a linear program, which we solve to optimality, so its optimum is the tightest achievable bound and the reference any such tool returns. Two findings. First, tightness: on $40/40$ random instances the kernel SDP returns the \emph{exact} bound (maximum gap $0.0$), so the relaxation loses nothing here. Second, scale: the exact program over ontology-consistent world pairs has up to $2^{2m}$ variables, while the kernel SDP is $O(m^2)$. Table~\ref{tab:scale} runs both to the point where the exact program becomes unbuildable: at $m=18$ it has $2.2$ billion variables (intractable for any solver, \texttt{autobounds} included), while the kernel SDP returns the identical bound in $0.78$ seconds. The scoped capability is precise: \emph{the same answer, in polynomial time, where the exact program is dead}. The edge is scale, not tightness, the SDP cannot beat the exact program; and the ontology constraints are linear, so a tool that accepts them gets the same bound where it can run. The contribution is matching that bound at scales the exact program cannot reach.

\begin{table}[t]
\centering
\begin{tabular}{rrrr}
\toprule
attributes $m$ & worlds & exact-program vars & exact / kernel time \\
\midrule
$12$ & $960$ & $544{,}768$ & $4.67$s / $0.26$s \\
$14$ & $3{,}840$ & $8.7\times10^6$ & \emph{intractable} / $0.38$s \\
$16$ & $15{,}360$ & $1.4\times10^8$ & \emph{intractable} / $0.54$s \\
$18$ & $61{,}440$ & $2.2\times10^9$ & \emph{intractable} / $0.78$s \\
\bottomrule
\end{tabular}
\caption{Exact ontology-constrained counterfactual bound, exact response-type program versus kernel SDP. Both return the identical bound where both run ($40/40$, zero gap); past $m=12$ the exact program is intractable while the SDP stays sub-second.}
\label{tab:scale}
\end{table}

\subsection{Online scarring: localized hazard-memory tightens the bound}
\label{sec:onlinescar}
\emph{The fourth question: can we acquire the off-diagonal from experience?} The ontology need not be given in full. A world model can \emph{acquire} constraints by getting scarred: a trajectory that crosses a forbidden boundary leaves a detectable trace, the trace localizes the violated constraint, and the model stores it as a persistent scar. Accumulated scars shrink the feasible set, so the counterfactual bound tightens with experience. The mechanism is more than constraint-counting: scarring the constraint that the current worst-case bound \emph{relies on} (the configuration its extremal solution puts mass on, the one that bleeds) is a cutting plane. Empirically, blood-localized scars close the identifiability gap up to $4\times$ faster early (Figure~\ref{fig:scar}, left: $37\%$ vs $9\%$ at five scars, a $4\times$ margin that narrows to $1.6\times$ by eighteen, $64\%$ vs $40\%$, as both approach saturation); a no-memory control that encounters hazards but does not retain them never moves. The reason is spectral, and it is the same bottleneck that drives the Sly--Sun barrier of Section~\ref{sec:barrier}.

\begin{theorem}[Targeted scarring restores the spectral gap]
\label{thm:scar}
Let the admissible-world reconfiguration graph carry a phase-coexistence bottleneck of conductance $\Phi=\exp(-\Omega(n))$ splitting the worlds into a majority component and a minority phase of measure $\mu_{\min}$. Then (i) $\lambda_2\le 2\Phi=\exp(-\Omega(n))$ by Cheeger; (ii) scarring (deleting) the minority phase leaves a residual with $\lambda_2=\Omega(1)$; (iii) the gap is restored only once the scarred fraction reaches $\mu_{\min}$ and only if scarring targets the bottleneck cut: a random scarred fraction leaves $\Phi$, hence $\lambda_2$, exponentially small.
\end{theorem}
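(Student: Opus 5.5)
The plan is to treat the reconfiguration graph as a reversible Markov chain $P$ on the admissible worlds $\T_E$ with stationary law $\mu$, and to read each part of Theorem~\ref{thm:scar} off the conductance $\Phi(S)=Q(S,S^c)/\min(\mu(S),\mu(S^c))$, where $Q(S,S^c)=\sum_{x\in S,y\notin S}\mu(x)P(x,y)$. For (i) we only need the easy half of Cheeger's inequality, $\lambda_2\le 2\Phi$. We prove it variationally. Take the test function $f=\mathbf{1}_S-\mu(S)$ for the bottleneck set $S$ (the minority phase). Then the Dirichlet form is $\mathcal{E}(f,f)=Q(S,S^c)$ and the variance is $\mathrm{Var}_\mu f=\mu(S)\mu(S^c)\ge\tfrac12\min(\mu(S),\mu(S^c))$. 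Hence $\lambda_2\le\mathcal{E}(f,f)/\mathrm{Var}_\mu f\le 2\Phi=\exp(-\Omega(n))$. This step is routine.

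For (ii) we need a hypothesis that the statement leaves implicit, and we will state it explicitly. The bottleneck is the \emph{only} obstruction, meaning that the majority component, viewed as an induced chain, has conductance bounded below by a constant $\Phi_{\mathrm{maj}}=\Omega(1)$. We would formalize ``majority component'' this way, since it is what the phase-coexistence picture of Section~\ref{sec:barrier} supplies: within a single phase correlations decay. Scarring deletes the minority phase. The residual chain is then the restriction of $P$ to $S^c$, with the deleted transitions turned into holding, so it stays reversible with respect to $\mu(\cdot\mid S^c)$. Its conductance is exactly the internal conductance of the majority component, and the hard half of Cheeger gives $\lambda_2\ge\Phi_{\mathrm{maj}}^2/2=\Omega(1)$. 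The part to check is that turning removed edges into holding never lowers conductance inside $S^c$. It does not, since a cut $A\subset S^c$ keeps all of its flow into $S^c\setminus A$.

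Part (iii) is the main obstacle, because ``random scarring'' must be made precise. We would model it as deleting a set $R$ of worlds drawn independently of the cut, with $\mu(R)=\epsilon$. We then exhibit the same witness set $S\setminus R$ in the residual. Its flow to the complement is at most the old cut flow $Q(S,S^c)$, since deleting vertices only removes edges and the holding adjustment adds no cross edges. Its mass is $\mu(S\setminus R)/(1-\epsilon)$, which concentrates near $\mu_{\min}(1-\epsilon)$ by independence. So its conductance is at most $\Phi/((1-\epsilon)(1-\epsilon'))$, which is still $\exp(-\Omega(n))$ whenever $\epsilon$ is bounded away from $1$. Part (i) applied to the residual then keeps $\lambda_2$ exponentially small.

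For the threshold claim in (iii), suppose a targeted scarring removes only a fraction $\mu(R)<\mu_{\min}$ of the minority phase. Some nonempty piece of that phase survives, and its boundary to the majority is still a subset of the original bottleneck edges. We would need a mild regularity assumption here, namely that the boundary flow of any surviving piece scales at most proportionally to its mass. With that, the same test-function argument gives exponential smallness. This regularity condition is where we expect the argument to need an added hypothesis, and we would state it alongside the hypothesis of part (ii).
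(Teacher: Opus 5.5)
Your proposal is correct and follows the same route as the paper's proof: the easy half of Cheeger on the bottleneck cut for (i), the $\Omega(1)$ internal conductance of the majority component plus the hard half of Cheeger for (ii), and a surviving bottleneck witness under non-targeted deletion for (iii); where the paper simply cites Sly's construction for that $\Omega(1)$ internal conductance, you state it as an explicit hypothesis. Your extra regularity assumption for the threshold claim is only needed when the surviving piece has exponentially small mass, because the witness $S\setminus R$ already has residual conductance at most $\Phi\,\mu_{\min}/\mu(S\setminus R)$.
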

\begin{proof}
(i) is Cheeger's inequality. For (ii)--(iii), the bottleneck is by definition the cut of conductance $\Phi$, and in the hard-core coexistence regime it is exactly the boundary between the majority and minority phases (Sly's construction). Deleting the minority phase removes one side of that cut, so the residual is the majority component, whose internal conductance is $\Omega(1)$; hence $\lambda_2=\Omega(1)$. A deletion not aligned with the cut leaves both sides and the connecting boundary intact, so the conductance, and by Cheeger the gap, remain $\exp(-\Omega(n))$. The threshold fraction is the measure $\mu_{\min}$ of the side removed.
\end{proof}

This is why localized scarring works and random does not (Figure~\ref{fig:scar}, right): the Fiedler value of the residual graph stays exponentially small until the scarred fraction removes the bottleneck phase, then jumps to $\Omega(1)$, a threshold rather than a proportionality. Scarring is the operation that converts the intractable (bottlenecked) regime into the tractable one, by spending observations on exactly the constraints that bind. Algorithm~\ref{alg:scar} is the cutting plane: it scars the constraint the current worst-case bound leans on.

\begin{algorithm}[t]
\caption{Online scar-accumulation (blood-localized cutting plane; operationalizes Thm.~\ref{thm:scar})}
\label{alg:scar}
\begin{algorithmic}[1]
\Require marginals $d$; query $Q$; hazard oracle $\mathcal{H}$ (flags infeasible configurations)
\State $\mathrm{scars}\gets\varnothing$
\Repeat
  \State $J^\star\gets$ the bound-\emph{maximizing} joint over configurations matching $d$ with $\mathrm{scars}$ set to $0$
  \State $(i,j)\gets \arg\max$ mass $J^\star_{ij}$ over cells with $\mathcal{H}(i,j)=\text{infeasible}$ \Comment{where the worst case bleeds}
  \If{no such cell} \textbf{break} \Comment{bound is exact on current information} \EndIf
  \State $\mathrm{scars}\gets\mathrm{scars}\cup\{(i,j)\}$ \Comment{store the localized constraint as a scar}
\Until{budget exhausted}
\State \Return the tightened bound on $Q$ \Comment{up to $4\times$ fewer scars than random, early}
\end{algorithmic}
\end{algorithm}

\begin{figure}[t]
\centering
\includegraphics[width=0.96\textwidth]{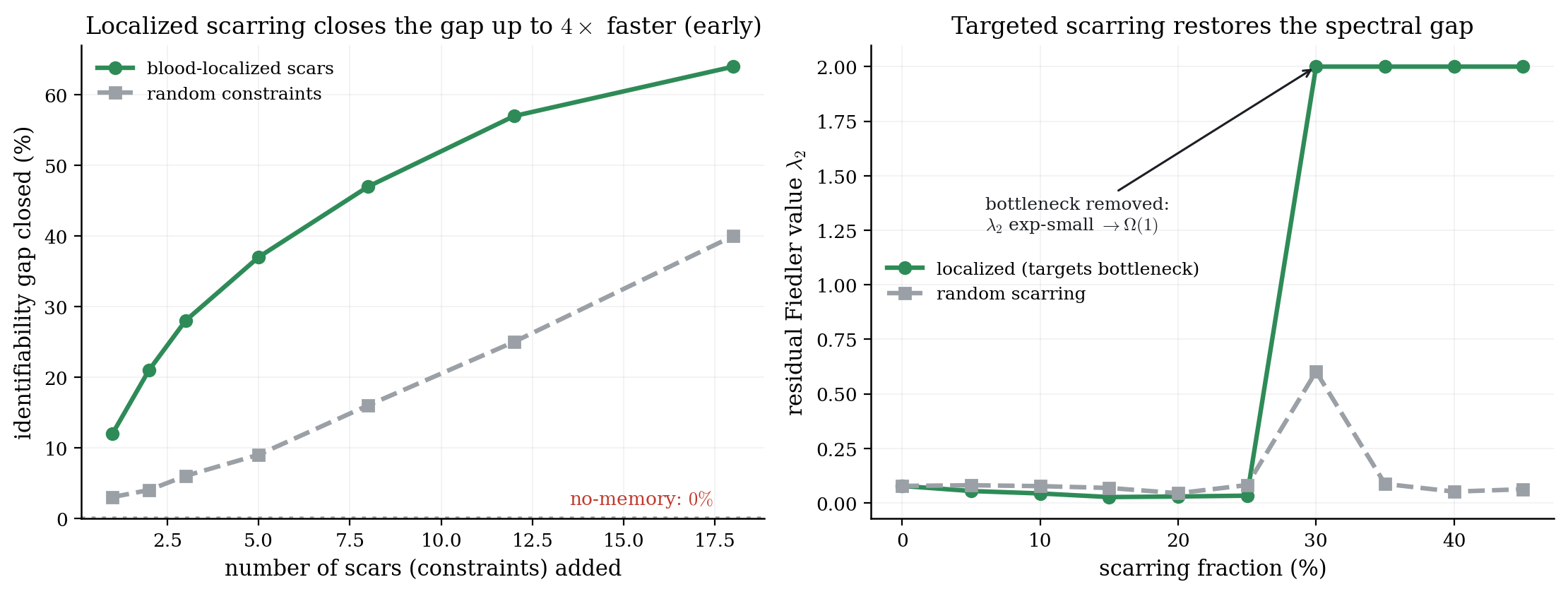}
\caption{Online scarring. \emph{Left:} blood-localized scars (the constraints the worst-case bound bleeds on) close the counterfactual identifiability gap up to $4\times$ faster (early) than the same number of random constraints; a no-memory control stays at $0\%$. \emph{Right:} Theorem~\ref{thm:scar} in numbers: the residual Fiedler value $\lambda_2$ is exponentially small until targeted scarring removes the bottleneck phase, then jumps to $\Omega(1)$ (a threshold, not a proportionality); random scarring never restores it.}
\label{fig:scar}
\end{figure}

\subsection{The generative direction}
\label{sec:apriori}
The same scar structure points to a generative use. A generator whose transition operator block-decomposes across the admissible/forbidden partition keeps the admissible set invariant, so it generates only admissible worlds by construction. This is the generative counterpart of the diagonal projector of Section~\ref{sec:diagonal}: instead of predict-then-shield, the dynamics cannot leave the ontology. It is a route to the generative kernel the predictive world-model program lacks, and we leave its learning to future work; for the bounds above it adds nothing beyond imposing the constraints.

\section{When the off-diagonal is reachable: the Sly--Sun barrier}
\label{sec:barrier}

\emph{The fifth question: when is the off-diagonal reachable?} A disjointness ontology maps its admissible worlds to the independent sets of a graph $G$, and the kernel's diagonal to the (uniform or weighted) measure $\mu$ over $\mathcal{M}(G)$. Accessing the off-diagonal at scale means computing the kernel normalizer $Z_G=\sum_{\mathbf{m}\in\mathcal{M}(G)}w(\mathbf{m})$ and its restricted conditional occupation marginals: uniform model sampling, counting, and the conditionals that bound counterfactuals all reduce to these. We state when they are tractable.

\begin{theorem}[Counting transition]
\label{thm:barrier}
At unit fugacity, on graphs of maximum degree $\Delta$:
(i) for $\Delta\le 5$ the normalizer $Z_G$ and every restricted conditional occupation marginal are approximable to additive $1/\mathrm{poly}(n)$ in deterministic $\mathrm{poly}(n)$ time;
(ii) for $\Delta\ge 6$ no $\mathrm{poly}(n)$-time algorithm approximates them unless $\mathrm{NP}=\mathrm{RP}$.
The two regimes are separated by the Sly--Sun threshold, and the order parameter is the Bethe cavity field $\eta$, with $(d-1)\eta$ crossing $1$ at the transition.
\end{theorem}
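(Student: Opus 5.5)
The plan is to derive Theorem~\ref{thm:barrier} by specializing two known results on the hard-core model at fugacity $\lambda$ on graphs of maximum degree $\Delta$. The first is Weitz's correlation-decay algorithm, which works in the tree-uniqueness regime $\lambda<\lambda_c(\Delta)$. The second is the Sly / Sly--Sun / Galanis--\v{S}tefankovi\v{c}--Vigoda inapproximability theorem, which applies for $\lambda>\lambda_c(\Delta)$. Here the uniqueness threshold is
\[
\lambda_c(\Delta)=\frac{(\Delta-1)^{\Delta-1}}{(\Delta-2)^{\Delta}} .
\]
The first step is the arithmetic that places $\lambda=1$ on the two sides of this threshold:
\[
\lambda_c(5)=\frac{4^4}{3^5}=\frac{256}{243}>1,\qquad \lambda_c(6)=\frac{5^5}{4^6}=\frac{3125}{4096}<1 .
\]
Since $\lambda_c$ is decreasing in $\Delta$, unit fugacity is in uniqueness for every $\Delta\le5$ and in non-uniqueness for every $\Delta\ge6$.

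For (i), I first reduce restricted conditional marginals to unrestricted ones on a smaller graph. Pinning a vertex to ``occupied'' deletes it together with its neighbours. Pinning a vertex to ``unoccupied'' deletes it. In both cases the conditioned measure is the hard-core model at $\lambda=1$ on an induced subgraph, and its maximum degree is still at most $\Delta$.

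Next I invoke Weitz's theorem. Strong spatial mixing holds on the self-avoiding-walk tree for $\lambda<\lambda_c(\Delta)$, so truncating that tree at depth $O(\log n)$ computes each occupation marginal to additive $1/\mathrm{poly}(n)$ in deterministic time $n^{O(\log\Delta)}=\mathrm{poly}(n)$, since $\Delta\le5$ is constant.

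For the normalizer, I telescope
\[
Z_G=\prod_{i}\Pr(v_i\text{ unoccupied}\mid v_1,\dots,v_{i-1}\text{ unoccupied})^{-1}.
\]
Each factor is at least $1/(1+\lambda)=1/2$. Additive $1/\mathrm{poly}(n)$ error on a factor is therefore relative error $1/\mathrm{poly}(n)$, and the product of $n$ factors has relative error $1/\mathrm{poly}(n)$. This relative guarantee on $Z_G$ is the sense in which the statement's ``additive $1/\mathrm{poly}$'' should be read for the normalizer.

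For (ii), Sly--Sun show that for every $\lambda>\lambda_c(\Delta)$ there is no FPRAS for $Z_G$ on $\Delta$-regular graphs unless $\mathrm{NP}=\mathrm{RP}$. Applying this at $\lambda=1>\lambda_c(6)\ge\lambda_c(\Delta)$ gives hardness for the normalizer.

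Hardness transfers to the marginals through the same self-reducibility. A poly-time additive-$1/\mathrm{poly}$ approximation of the restricted marginals would, via the telescoping product and the lower bound $1/2$ on each factor, yield a relative approximation of $Z_G$. That contradicts the Sly--Sun bound.

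For the order-parameter clause, I write the $(d-1)$-ary tree recursion for the occupation ratio $R\mapsto\lambda/(1+R)^{d-1}$. I take $\eta$ to be the occupation probability at its fixed point $R^\ast$, that is $\eta=R^\ast/(1+R^\ast)$. The derivative of the recursion at the fixed point has magnitude $(d-1)R^\ast/(1+R^\ast)=(d-1)\eta$. Uniqueness is the condition that the fixed point is attracting, $(d-1)\eta\le1$, which is exactly $\lambda\le\lambda_c(d)$. Solving $(d-1)\eta=1$ at $\lambda=1$ then locates the crossing at the non-integer value $d_c\approx5.14$, between $5$ and $6$, consistent with the measurement in Section~\ref{sec:empirics}.

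I expect the main obstacle to be bookkeeping rather than mathematics. First, Sly's original hardness only covered $\lambda\in(\lambda_c,\lambda_c+\varepsilon)$. The full range, which is needed here because $1$ is not close to $\lambda_c(\Delta)$ for large $\Delta$, requires the Sly--Sun / Galanis--\v{S}tefankovi\v{c}--Vigoda extension, and I must check that their hypotheses cover unit fugacity for every $\Delta\ge6$. Second, the additive-versus-relative accuracy conventions must be aligned so that the reductions in both directions are tight.
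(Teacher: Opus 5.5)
Your proposal is correct and has the same skeleton as the paper's appendix proof. Both use Weitz's self-avoiding-walk-tree algorithm on vertex-deleted (induced) subgraphs plus a telescoping product for (i). Both use the Sly--Sun / Galanis--Stefankovic--Vigoda inapproximability theorem for (ii). Both read the order parameter off the contraction factor of the cavity recursion. Two points differ.

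\emph{Hardness of the marginals.} The paper goes from conditionals to an approximate sampler in total variation, then through Jerrum--Valiant--Vazirani downward self-reducibility to an FPRAS. You reuse the telescoping identity directly: each factor satisfies $\Pr(v_i\text{ unoccupied}\mid\cdot)\ge 1/2$, so additive error becomes relative error. This is shorter, deterministic, and is exactly the identity already used in (i). Your reading of ``additive $1/\mathrm{poly}$'' for $Z_G$ as a relative guarantee is also what the paper's telescoping actually delivers.

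\emph{The order parameter.} This is the more substantive difference. You identify the contraction criterion with uniqueness exactly. The paper's appendix instead insists these are two distinct criteria, placing the contraction crossing at $d\approx5.14$ and the $\lambda_c(d)=1$ crossing at $d\approx5.66$. Your algebra is the right one. With $\eta=R^\ast/(1+R^\ast)$, the condition $(d-1)\eta=1$ forces $R^\ast=1/(d-2)$. Hence $\lambda=R^\ast(1+R^\ast)^{d-1}=(d-1)^{d-1}/(d-2)^d=\lambda_c(d)$, for real as well as integer $d$. At $\lambda=1$ both conditions reduce to $(d-2)^d=(d-1)^{d-1}$, whose root is $d\approx5.14$; already $\lambda_c(5.15)\approx0.997<1$. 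So the paper's $5.66$ does not survive, and your single crossing at $d_c\approx5.14$ is correct.

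Finally, the check you flagged resolves favourably. The Sly--Sun theorem covers every $\lambda>\lambda_c(\Delta)$ on $\Delta$-regular graphs, so unit fugacity is covered for all $\Delta\ge6$, not just near the threshold as in Sly's original result.
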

The mechanism is locality. A counterfactual ratio such as $\PN$ needs no global normalizer (it cancels between numerator and denominator); it needs only restricted conditional occupation marginals, and below the threshold each of these is a \emph{local} computation, Weitz's self-avoiding-walk tree rooted at the query vertex, correct precisely because correlation decays so that distant worlds do not affect it, a nearest-neighbour read of world space. The threshold is exactly where this locality breaks: correlation stops decaying, the neighbourhood that determines the marginal grows to the whole graph, and an efficient approximation would then yield an FPRAS for a quantity Sly--Sun proves inapproximable. A proof, the cavity recursion, and the unconditional spectral (Fiedler) witness of the upper regime are given in Appendix~\ref{app:barrier}. The single field $\eta$ governs both the counting verdict (the sign of $(d-1)\eta-1$) and the rate at which evidence collapses the diagonal. Algorithm~\ref{alg:prep} is the constructive content of both directions: a Weitz correlation-decay estimator below threshold, and the counting reduction that forbids it above.

\begin{algorithm}[t]
\caption{Bound the counterfactual conditionals, or witness the counting barrier (proves Thm.~\ref{thm:barrier})}
\label{alg:prep}
\begin{algorithmic}[1]
\Require constraint graph $G$, max degree $\Delta$, unit fugacity
\If{$\Delta\le 5$} \Comment{below the Sly--Sun threshold: uniqueness regime}
  \State fix a vertex order $v_1,\dots,v_n$
  \For{$k=1$ to $n$}
    \State $q_k\gets P(v_k\!\in\!S \mid v_1\!\dots v_{k-1}\text{ fixed})$ via Weitz self-avoiding-walk tree to $\pm 1/\mathrm{poly}$ \Comment{correlation decay}
  \EndFor
  \State \Return $Z_G=\prod_k(\text{conditional factors})$ and all restricted occupation marginals to $1/\mathrm{poly}(n)$
\Else \Comment{$\Delta\ge 6$: assume a poly-time approximator and derive a collapse}
  \State an approximate sampler/counter for $\mathcal{M}(G)$ within $\mathrm{TV}\le 1/\mathrm{poly}$ follows from the conditionals
  \State by downward self-reducibility (Jerrum--Valiant--Vazirani), assemble an FPRAS for $I(G)$
  \State Sly--Sun: approximating $I(G)$ at $\lambda{=}1,\Delta\ge 6$ is NP-hard $\Rightarrow$ $\mathrm{NP}=\mathrm{RP}$
  \State \Return ``not approximable unless $\mathrm{NP}=\mathrm{RP}$''
\EndIf
\end{algorithmic}
\end{algorithm}

\begin{corollary}[Kernel reconstruction is at least as hard as counting]
Any procedure that reconstructs enough of the kernel to recover the restricted diagonal conditionals for every adaptive restriction also approximates the normalizer. Hence full restriction-uniform off-diagonal access inherits the approximate-counting barrier.
\end{corollary}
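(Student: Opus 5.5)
The argument is a self-reducibility telescoping: an oracle for restricted diagonal conditionals under every adaptive restriction turns into an estimate of $Z_G$. Fix a vertex order $v_1,\dots,v_n$ of the constraint graph $G$ and build the adaptive restriction sequence used in Algorithm~\ref{alg:prep}: $R_0=\varnothing$, and $R_k=R_{k-1}\cup\{v_k\notin S\}$, so that each conditioning event is a restriction of the admissible-world set. The key identity is the standard product
\[
\frac{1}{Z_G}=\mu(\text{all } v_k\notin S)=\prod_{k=1}^n P_\mu\big(v_k\notin S\mid R_{k-1}\big),
\]
using that the empty configuration has weight $1$ at unit fugacity. By the Classical shadow lemma, each factor is a diagonal read of the projected kernel $\rho_{R_{k-1}}=P_{R_{k-1}}\rho P_{R_{k-1}}/\Tr(\rho P_{R_{k-1}})$, evaluated on the classical event $\{v_k\notin S\}$. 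A reconstruction procedure that returns these restricted diagonal conditionals for every adaptive restriction therefore supplies every factor.

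Next comes error propagation. Each factor is bounded below by a constant: in the hard-core model at $\lambda=1$, $P(v\notin S\mid\cdot)\ge 1/2$, since conditioning on the neighbourhood gives occupation probability at most $\lambda/(1+\lambda)$. An additive error $\epsilon_k$ on a factor is then a relative error of at most $2\epsilon_k$. Taking $\epsilon_k\le \epsilon/(4n)$ gives a product with relative error $e^{\pm\epsilon}$, which is an $\epsilon$-relative approximation of $Z_G$ at polynomial overhead. If the reconstruction is randomized, I would amplify each query by the median trick to success probability $1-\delta/n$ and take a union bound, giving an FPRAS-style guarantee.

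Finally, combine this with Theorem~\ref{thm:barrier}(ii). For $\Delta\ge 6$, a polynomial-time kernel reconstruction of the stated kind would approximate $Z_G$, contradicting Sly--Sun unless $\mathrm{NP}=\mathrm{RP}$. So full restriction-uniform off-diagonal access inherits the barrier.

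The main obstacle is definitional rather than computational: I must pin down what ``reconstructs enough of the kernel'' requires, and in particular that the adaptive restrictions $R_k$ lie within the allowed family. The reduction needs them to be chosen online, after earlier answers. I would state the hypothesis as oracle access to $\Tr(\rho_R P_A)$ for all restrictions $R$ given by fixing vertex states and all single-vertex events $A$, to additive $1/\mathrm{poly}(n)$. I would also check that the ``off-diagonal'' framing adds nothing beyond this, noting via Remark~\ref{rem:convention} that these conditionals are diagonal reads of the response-type law, so the hardness is about the classical object.
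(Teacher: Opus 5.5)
Your argument is correct in the hard-core setting of Section~\ref{sec:barrier}, but it takes a different, more specialized route than the paper's proof in Appendix~\ref{app:counting}.

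\textbf{What you do.} You telescope along a fixed path, the all-vacant configuration. You control errors with a model-specific lemma: at unit fugacity every vacancy conditional is at least $1/2$, and the terminal weight $w(\varnothing)=1$ is known exactly.

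\textbf{What the paper does.} It works with arbitrary explicit weights $w_x(y)$ on $\{0,1\}^n$ and uses a Bayes-greedy telescoping. It queries the next-bit conditional, follows whichever branch has probability at least $1/3$, and outputs $Z_x=w_x(y)/\prod_k p_k$ for the leaf $y$ reached. It assumes only that terminal weights are computable to relative accuracy. The adaptive branch choice is exactly what replaces your lower bound. For a general weighted generator, a fixed path can have zero or exponentially small conditionals, and then an additive $1/\mathrm{poly}(n)$ error on a factor destroys relative accuracy. This is also why the hypothesis speaks of \emph{adaptive} restrictions.

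\textbf{Adaptivity in your version.} Your reduction is not adaptive at all: the $R_k$ are fixed in advance and do not depend on the oracle's answers. So your closing remark that the restrictions must be chosen online is unnecessary for your scheme. In fact you need a weaker oracle, answering only the $n$ nested all-vacant prefixes.

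\textbf{What each approach buys.} You gain simplicity and a weaker access assumption, and you lose generality. Your argument covers the hard-core kernel, or any bounded product fugacity, where the factors are at least $1/(1+\lambda)$. That is enough for the ``Hence'' clause, since Theorem~\ref{thm:barrier}(ii) is stated only for that model. However, the first sentence of the corollary, as the paper proves it, concerns an arbitrary weighted generator, and there you would need the paper's heavier-branch step.
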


This is the honesty boundary the program needs. Old ontology AI assumed it could store and traverse the world; the barrier says full kernel access is a counting-limited primitive, free only on the tractable side of degree $5/6$. We implement the transition directly (Figure~\ref{fig:slysun}): the cavity order parameter $(d-1)\eta$ crosses $1$ between $d=5$ ($0.980$) and $d=6$ ($1.110$), and at exactly that crossing the belief-propagation error in counting the admissible worlds jumps from $2\%$ to $84\%$, while the Fiedler value of the world-reconfiguration graph declines as its bottleneck tightens. Below the threshold the kernel is reconstructible (Weitz correlation decay computes the conditionals exactly).

\paragraph{Above the threshold: bounded, and crossable by scarring.} The barrier forbids \emph{exact} full access for $\Delta\ge 6$, but it is not a dead end: two polynomial-time recourses remain, and together they are the practical answer to the hard regime. First, the kernel SDP of Section~\ref{sec:scale} is degree-independent, returning a valid outer bound on every counterfactual at any $\Delta$ in $O(m^2)$ time, with no counting at all. Second, and more pointedly, the hardness above the threshold is a \emph{phase-coexistence} phenomenon, two competing clusters of worlds joined by an exponentially thin neck, and Theorem~\ref{thm:scar} shows that this neck is exactly what scarring removes. Conditioning on (scarring to) one phase leaves a single component on which correlation decay is restored and $\lambda_2=\Omega(1)$, so Weitz and Glauber compute the within-phase conditionals exactly in polynomial time; the only genuinely inaccessible content, the cross-phase coupling, is precisely what the SDP brackets. The decomposition is complete: within-phase quantities exact by phase-conditioned scarring, the cross-phase coupling bounded by the PSD relaxation, both polynomial above the threshold. The barrier bounds naive enumeration of the whole world space; it is crossed by paying for one bit, which phase, and the off-diagonal answer survives as an exact-within-phase value plus a tight cross-phase bracket.

\paragraph{Sly--Sun calipers, and their measured limit.} This recourse has a clean geometric form, and we verify both where it works and where it fails. When the regime is few-phase, the measure is a convex combination $\mu=\sum_k\alpha_k\mu_k$ of extremal phases, and a counterfactual aggregate inherits it, $E_\mu[f]=\sum_k\alpha_k E_{\mu_k}[f]$. Each phase sits in the correlation-decay regime, so $E_{\mu_k}[f]$ is a local computation with no global count, and the identified value lies in the convex hull of the phase values. With two phases this is a bracket whose jaws are the two phase-expectations: the two phases \emph{are} the calipers (Figure~\ref{fig:calipers}), and the only unidentified degree of freedom is the mixing weight $\alpha$. We confirm it on bipartite $d$-regular graphs above the threshold ($d=6,7$, $\lambda=1$): the two phases, recovered as the two belief-propagation fixed points, give jaws that bracket the true value on every instance (for example $[0.28,3.27]\ni 1.96$ at $n=16,d=6$), each jaw a local computation. We need not even assume the phases: a block power iteration with two probe vectors, spun by the single-flip dynamics, makes them emerge as the top two modes, the second aligning with the true phase split at correlation $1.000$ while the spectral gap it exposes halves with system size ($0.013\to0.0065\to0.0033$), tracing the bottleneck rather than presupposing it.

This works precisely when the top of the spectrum is sparse, and we measure where it stops. In a glass the number of near-degenerate phases $K$ grows and the top eigenvalues crowd together (gap $\to 0$); a fixed two-probe iteration then captures two of $K$ modes and underestimates the spread (Figure~\ref{fig:glass}): on a tunable $K$-phase instance the two-probe range is $2.0$ against a true phase spread of $7.0$ at $K=8$, and the two-probe iteration count grows from $19$ to $860$ as the modes crowd. Recovering all $K$ phases needs a block of size $K$, which is $e^{\Omega(n)}$ in a genuine glass. The calipers cross the barrier exactly in the few-phase regime and provably stall in the glass: the spectral signature of the same Sly--Sun wall, measured rather than asserted.

\begin{figure}[t]
\centering
\includegraphics[width=0.96\textwidth]{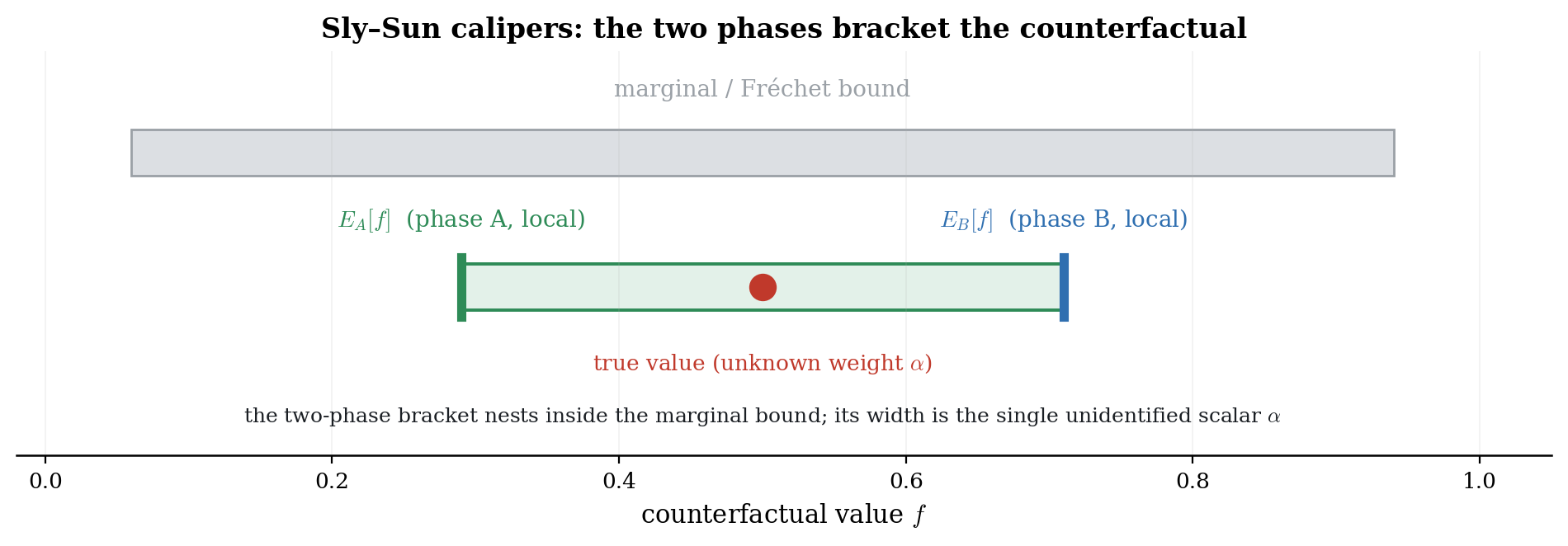}
\caption{Sly--Sun calipers. \emph{Left:} the two coexisting phases are the caliper jaws, each a within-phase local computation (no global count). \emph{Right:} the counterfactual is bracketed between the two phase-expectations $E_A[f],E_B[f]$; the true value (for the unknown phase weight $\alpha$) lies inside, and the bracket width is that single unidentified scalar.}
\label{fig:calipers}
\end{figure}

\begin{figure}[t]
\centering
\includegraphics[width=0.96\textwidth]{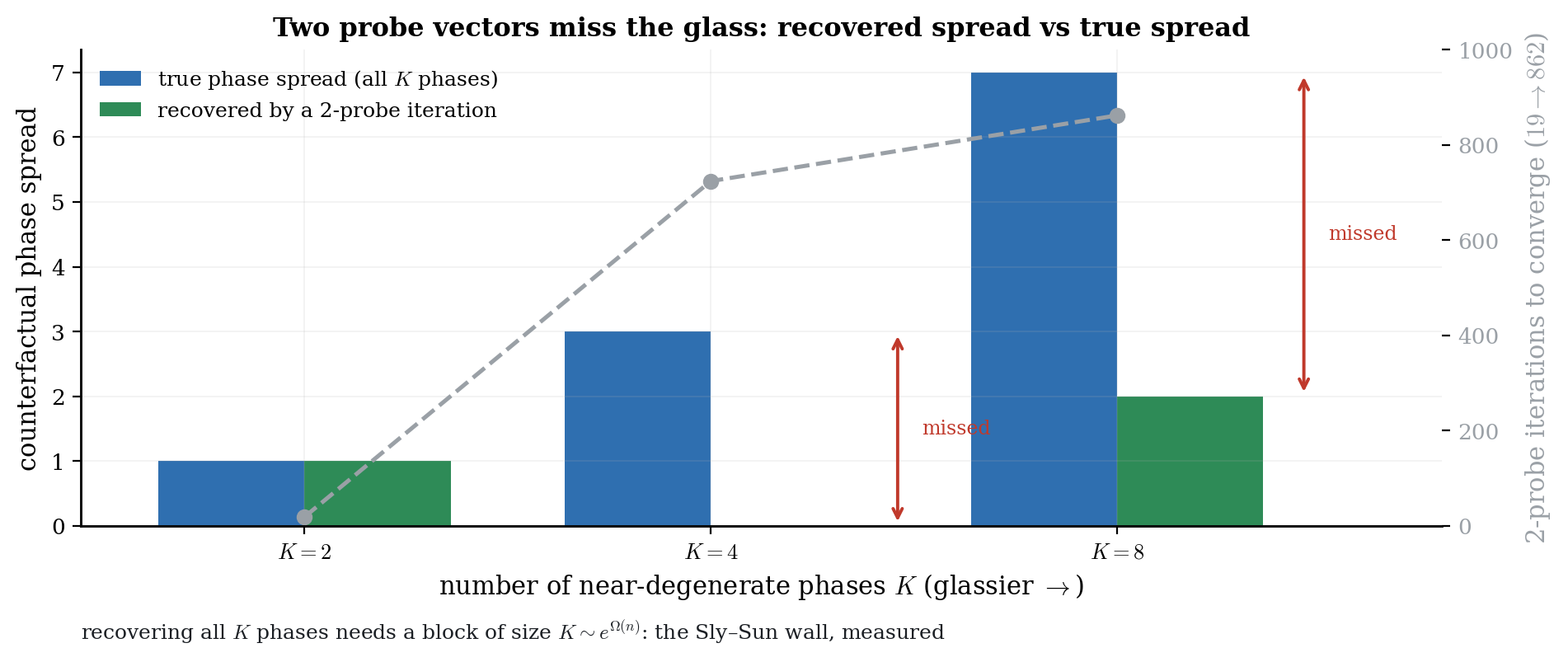}
\caption{Where the calipers stop, measured. \emph{Left:} few-phase regime, two modes near $1$ that two probes resolve cleanly (phase split recovered at correlation $1.0$). \emph{Right:} glass, $K$ near-degenerate modes crowd the top (gap $\approx 10^{-16}$); two probes capture two of $K$ and underestimate the spread ($2.0$ vs true $7.0$ at $K=8$), and recovering all $K$ needs a block of size $K\sim e^{\Omega(n)}$. This is the Sly--Sun wall as a spectral phenomenon.}
\label{fig:glass}
\end{figure}

\begin{figure}[t]
\centering
\includegraphics[width=0.96\textwidth]{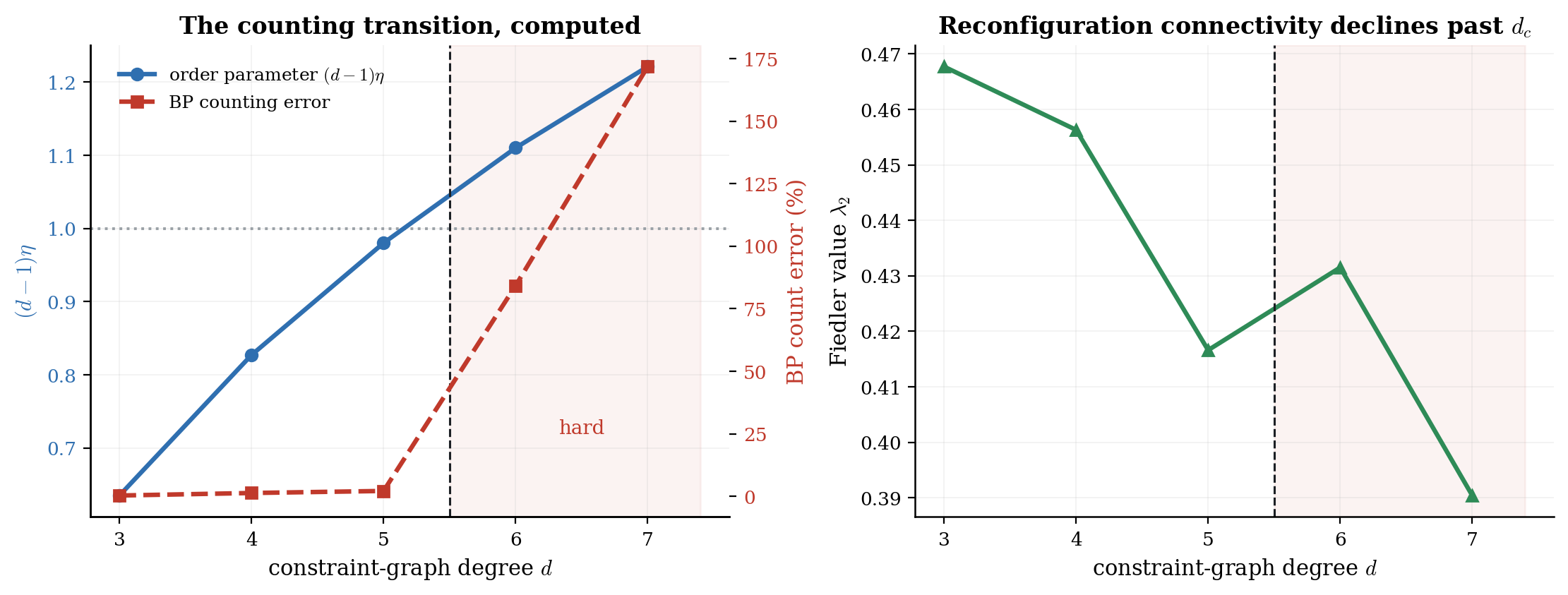}
\caption{The counting transition, implemented. \emph{Left:} the cavity order parameter $(d-1)\eta$ crosses $1$ between degree $5$ and $6$, and belief-propagation error in counting independent sets explodes from $2\%$ to $84\%$ at the crossing. \emph{Right:} the Fiedler value of the independent-set reconfiguration graph declines as the bottleneck tightens (the slow-mixing witness; the $\exp(-\Omega(n))$ collapse is asymptotic, so finite $n$ shows a trend). Above the threshold exact kernel access is intractable and the SDP bound is the only survivor.}
\label{fig:slysun}
\end{figure} The single field $\eta=1-u$ also fixes, in closed form, the rate at which evidence collapses the diagonal, monotonically with ontology density (Appendix~\ref{app:barrier}).

\section{Intelligence as closure-preserving counterfactual competence}
\label{sec:intelligence}

\begin{definition}[Closure-preserving counterfactual competence]
A model is closure-preserving counterfactually competent for a domain and class $Q_{\mathrm{world}}$ if its latent state is sufficient for the quotient $\Omega/\!\sim_{Q_{\mathrm{world}}}$ and its executed transitions preserve the entailed admissible set under every intervention sequence.
\end{definition}

\begin{theorem}[WorldKernel intelligence theorem]
\label{thm:intelligence}
Suppose (i) the latent state $z_t=(h_t,e_t,\Adm_t,\kappa_t)$ is complete for $Q_{\mathrm{world}}$ on the domain, with $h_t$ learned perception, $e_t$ entailed closure, $\Adm_t$ the admissible set, and $\kappa_t$ a slice of the kernel; (ii) the diagonal projector is single-step sound; (iii) kernel queries are either not required or lie in the tractable regime of Theorem~\ref{thm:barrier}. Then the model is closure-preserving counterfactually competent, and on any domain with predictively equivalent but admissibility-distinct worlds no prediction-only latent satisfies the same criterion.
\end{theorem}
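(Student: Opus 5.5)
The plan is to verify the two clauses of the definition of closure-preserving counterfactual competence separately, and then obtain the negative statement from Theorem~\ref{thm:predins}. The theorem is mostly an assembly of earlier results: Theorem~\ref{thm:quotient} for sufficiency, Theorem~\ref{thm:soundness} for invariance, and Theorem~\ref{thm:barrier} for the kernel component. The work lies in checking that the hypotheses fit together, not in any new estimate.

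\emph{Sufficiency for the quotient.} By hypothesis (i), $z_t$ is $Q_{\mathrm{world}}$-complete. Theorem~\ref{thm:quotient} then gives that $z_t$ factors through $\Omega/\!\sim_{Q_{\mathrm{world}}}$, so $\phi(\omega)=\phi(\omega')$ implies $\omega\sim_{Q_{\mathrm{world}}}\omega'$. This is exactly the sufficiency the definition asks for.

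The subtlety is that completeness must be \emph{effective} for the kernel component $\kappa_t$. The decoders $f_q$ for $q\in Q_{\mathrm{ker}}$ have to be computable, or the completeness is only nominal. Here I use hypothesis (iii). Either no kernel queries are required, and then $Q_{\mathrm{ker}}$ drops out. Or they lie in the regime $\Delta\le 5$ of Theorem~\ref{thm:barrier}(i), where the restricted conditional marginals are approximable to additive $1/\mathrm{poly}(n)$ in polynomial time. I would state sufficiency up to this additive tolerance and note that it is the precise sense in which $\kappa_t$ decodes kernel queries.

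\emph{Preservation of the admissible set.} By hypothesis (ii), the executed transitions are $s_{t+1}=g_\Ow(s_t,a_t)$ with a single-step sound projector. Theorem~\ref{thm:soundness} then yields $s_t\notin\Phi_\Ow$ and $\neg V_\Ow(s_t,s_{t+1})$ for every $t$ and every action sequence, whatever the learned $F_\theta$ proposes. I would read an intervention sequence as an arbitrary action sequence, which is legitimate because Theorem~\ref{thm:soundness} quantifies over all of them. This gives invariance of the entailed admissible set under every intervention sequence.

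\emph{Separation from prediction.} Take a domain containing predictively equivalent but admissibility-distinct worlds $W_0,W_1$, as constructed in Theorem~\ref{thm:predins}. A prediction-only latent is a function of the predictive conditionals alone, so it assigns $W_0$ and $W_1$ the same value. Suppose such a latent satisfied the criterion. It would then be sufficient for $\Omega/\!\sim_{Q_{\mathrm{world}}}$, hence $Q_{\mathrm{adm}}$-complete. By Theorem~\ref{thm:quotient} it would have to separate $W_0$ from $W_1$, since the admissibility query at $(s,a)$ with $T(s,a)\in L$ distinguishes them. This is a contradiction.

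I expect the main obstacle to be the precise meaning of ``prediction-only latent''. It must be formalized as a representation measurable with respect to the predictive equivalence $\sim_{Q_{\mathrm{pred}}}$, that is, one constant on $\sim_{Q_{\mathrm{pred}}}$-classes. Otherwise a latent could smuggle in ontology information and the argument would fail. I would fix this definition first, and only then invoke Theorem~\ref{thm:quotient} for the contradiction.
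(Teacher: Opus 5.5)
Your proposal is correct and follows the paper's proof: sufficiency from (i) via Theorem~\ref{thm:quotient}, admissibility invariance from (ii) via Theorem~\ref{thm:soundness}, kernel-query coverage from (iii), and the separation from Theorem~\ref{thm:predins}. Formalizing a prediction-only latent as constant on $\sim_{Q_{\mathrm{pred}}}$-classes usefully makes explicit what the paper leaves implicit. One small correction: exact sufficiency already follows from (i) alone, so the $1/\mathrm{poly}(n)$ tolerance from (iii) qualifies only the computability of the kernel decoders, and you should not weaken the sufficiency clause to hold only up to that tolerance.
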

\begin{proof}
Completeness follows from (i) and Theorem~\ref{thm:quotient}. Horizon admissibility under interventions follows from (ii) and Theorem~\ref{thm:soundness}. Kernel-query coverage follows from (iii), with the diagonal/projected model sufficient otherwise. Separation from prediction-only latents is Theorem~\ref{thm:predins}.
\end{proof}

This is a scoped, precise notion, not a mystical one: it names a competence (closure-preserving execution plus off-diagonal counterfactual access) that prediction-only models provably cannot guarantee, and it states exactly when that competence is computationally available.

\section{Related work, discussion, and limitations}

\paragraph{Predictive, physical, and spatial world models.} The modern world-model program, from Ha and Schmidhuber's recurrent world models~\cite{haschmidhuber} and the Dreamer line~\cite{dreamerv3} to LeCun's JEPA agenda~\cite{lecun}, V-JEPA~2~\cite{vjepa}, NVIDIA Cosmos~\cite{cosmos}, DeepMind Genie~\cite{genie}, and World Labs' Marble~\cite{marble}, compresses observations, trajectories, or scenes into a predictive latent. WorldKernel is orthogonal to all of these: it models the admissible-possibility space (semantic, causal, deontic, evidential), and its central object is a kernel over worlds, not a frame, trajectory, or pixel field. The closest relative in spirit is the event-graph substrate~\cite{substrates}, which answers counterfactual queries by forking an append-only log under a structured intervention vocabulary, a concrete diagonal-plus-off-diagonal realization without learned components.

\paragraph{Prediction is not understanding.} High predictive accuracy does not imply a correct world model. Vafa et al.\ exhibit generative sequence models with near-perfect next-token accuracy whose implicit world model is incoherent~\cite{vafaimplicit}, and models that predict orbital trajectories yet fail to recover the underlying mechanics~\cite{vafafound}; the stochastic-parrots critique~\cite{stochasticparrots} is the conceptual ancestor. We make this precise for one axis: the missing structure is the off-diagonal, and it is not a function of any predictive distribution.

\paragraph{LLMs and causal reasoning.} Whether language models reason causally is contested. Optimistic results report strong performance on causal benchmarks~\cite{kiciman}; the rebuttal is that such models recite causal facts without causal structure~\cite{causalparrots}. Rung-stratified benchmarks sharpen this: CLadder tests all three rungs of Pearl's hierarchy~\cite{cladder}, Corr2Cause finds near-random performance on pure causal inference~\cite{corr2cause}, and CounterBench finds frontier models near chance on genuine counterfactuals~\cite{counterbench}. Our battery (Section~\ref{sec:battery}) localizes the failure: the counterfactual coupling the model would need is absent from its inputs.

\paragraph{Neuro-symbolic and safe learning.} Semantic-loss and predicate-learning systems inject symbolic constraints as losses or vocabularies; shielding~\cite{shield} and control-barrier~\cite{cbf} methods enforce safety. Our admissibility construction compiles OWL closure into a per-transition projector over a learned model (the lava witness of Section~\ref{sec:diagonal}). On the implementation side, the learned proposer may be a frozen predictor whose forward pass is steered toward admissible proposals by sparse signed log-gate controllers~\cite{ntk}, while the hard projector preserves the guarantee of Theorem~\ref{thm:soundness} regardless of the controller, so the grammar can enter the forward pass without weakening certified soundness. Our diagonal/admissibility result reuses the shielding mechanism and claims no novelty for it; the contribution is that admissibility is the diagonal of a kernel whose off-diagonal is a separate, counterfactual dimension, realized physically and bounded computationally.

\paragraph{Joint-only failure modes.} The failures we identify are \emph{purely joint}: visible only in the coupling between worlds and in no marginal (diagonal) check. Verification that audits only the marginal statistics of a predictor therefore cannot detect them, which is the diagonal/off-diagonal distinction of this paper stated operationally.

\paragraph{Probabilistic ontologies, weighted model counting, and its barriers.} PR-OWL~\cite{prowl}, ProbLog~\cite{problog}, Markov logic networks~\cite{mln}, and probabilistic description logics~\cite{disponte} give distributions over logical worlds; these are diagonal objects, and their inference reduces to weighted model counting~\cite{chaviradarwiche} over compiled circuits~\cite{darwichemarquis}, as used by semantic loss~\cite{semloss}, DeepProbLog~\cite{deepproblog}, and neuro-symbolic predicate learners~\cite{visualpredicator}. Our normalizer is exactly a weighted model count, and the off-diagonal inherits the counting structure: \#P-hardness of the permanent~\cite{valiant}, the sampling-counting equivalence~\cite{jvv}, hard-core tractability below the tree threshold~\cite{weitz}, and matching inapproximability above it~\cite{sly,slysun,galanis}. Counterfactual partial-identification bounds via moment and polynomial-programming relaxations are developed by~\cite{duarte}; our semidefinite bound (Section~\ref{sec:psdbounds}) is the second-moment instance, read as the kernel's own positive-semidefiniteness, and extended with ontology axioms (Section~\ref{sec:ontobounds}) and online cutting-plane acquisition (Section~\ref{sec:onlinescar}). We contribute the off-diagonal, its counting boundary, this PSD reading, the ontology-structured tightening, and the scarring acquisition strategy.

\paragraph{Causal hierarchy, partial identification, and the cross-world coupling.} Our empirical core is classical in causal inference, assembled here around the off-diagonal. Pearl's structural semantics and the Causal Hierarchy Theorem establish that the rungs almost never collapse, so interventional data generically underdetermine counterfactuals~\cite{pearl,cht,ibelingicard}, and neural SCMs inherit this as a learnability limit~\cite{xia}. The unidentified object is the cross-world joint of potential outcomes: probabilities of causation are only bounded, not identified~\cite{tianpearl,lipearl}; treatment-effect counterfactuals are partially identified~\cite{manski}, computed by linear programming over response-function types~\cite{balkepearl,zhang,heckermanshachter}; nested counterfactuals are identifiable only under graphical conditions~\cite{correa}. The mediation literature is the same story: natural direct and indirect effects require a cross-world independence assumption no experiment can test~\cite{robinsgreenland,pearl2001,imai,vanderweele}, whose boundary is mapped by the recanting-witness criterion~\cite{avinshpitser}, single-world intervention graphs~\cite{robinsrichardson}, and direct analyses of the assumption~\cite{andrewsdidelez}; Dawid's skepticism that the cross-world joint is even meaningful~\cite{dawid} is the contrarian articulation of our thesis. Our contribution is not a new identifiability result: it is the identification of this cross-world coupling with the off-diagonal of a single positive semidefinite world kernel, the demonstration that the kernel's PSD structure tightens the bound in polynomial time beyond the marginals (Section~\ref{sec:psdbounds}), and its counting boundary, with the demonstration (Section~\ref{sec:battery}) that predictors and Bayesian baselines collapse it while the kernel reports it. Scale does not remove the gap, consistent with evidence that predictive scaling laws saturate~\cite{satscaling}.

\paragraph{Limitations.} The off-diagonal becomes accessible only on the tractable side of the Sly--Sun threshold; above it, the theory predicts its own intractability, which is a feature but bounds applicability. The keystone experiment is a finite-$n$ illustration of an asymptotic theorem, not an independent proof. The language-model baseline is a strong but single predictor; the claim is structural (the off-diagonal is not in rung-$1/2$ data), not about any particular model's competence. The generative use of the kernel (sampling admissible worlds rather than verifying them, Section~\ref{sec:apriori}) is left to future work; symbolic-discovery methods in this direction exhibit saturating returns~\cite{satscaling}, which suggests the generative gain is bounded and must be combined with the verification primitives developed here rather than replacing them.

\section{Conclusion}

A world model is not a predictor of futures. It is the coupling kernel of admissible possible worlds. The diagonal of that kernel is the posterior, and maintaining it certifiably is admissibility, the part prediction can approximate and a projector can guarantee. The off-diagonal is a genuinely separate direction: it is the cross-world coupling that fixes counterfactuals, it is invisible to the predictor and to any marginal check, its positive-semidefinite structure already tightens what can be inferred beyond the marginals in polynomial time, and its full reconstruction is reachable only below the Sly--Sun counting threshold. Prediction asks what happens next. Intelligence asks what can consistently be, what follows, what is permitted, what intervention is licensed, and which alternative worlds remain linked. That last question is the kernel.

\appendix
\section{The counting transition}
\label{app:barrier}

\paragraph{Order parameter.} On the infinite $d$-regular tree (the locally tree-like limit of random $d$-regular graphs) at fugacity $\lambda$, the hard-core cavity vacancy probability satisfies
\[
u=\frac{1}{1+\lambda u^{d-1}},
\]
whose root in $(0,1)$ exists and is unique for all $\lambda$ (it is the unique zero of the increasing $g(u)=u+\lambda u^{d}-1$ at $\lambda=1$), though the naive iteration $u\mapsto 1/(1+\lambda u^{d-1})$ converges to it only below $\lambda_c(d)=(d-1)^{d-1}/(d-2)^d$; above that one must solve $g(u)=0$ directly (bisection). The occupation field is $\eta=1-u$, and at $\lambda=1$ the fixed point obeys $u^d=1-u=\eta$. The recursion's contraction factor at the fixed point is $|T'(u^*)|=(d-1)\lambda u^{d}=(d-1)\eta$, the counting order parameter.

Two distinct criteria locate the same integer transition, and we keep them separate to avoid a common conflation. The contraction crossing $(d-1)\eta=1$ occurs at the continuous value $d\approx 5.14$ (numerically $(d-1)\eta=0.980$ at $d=5$ and $1.110$ at $d=6$). The fugacity-threshold crossing $\lambda_c(d)=1$ occurs at $d\approx 5.66$ ($\lambda_c(5)=4^4/3^5=1.053>1$, $\lambda_c(6)=5^5/4^6=0.763<1$). Both place the integer transition strictly between degree $5$ (tractable) and degree $6$ (hard); our figures use the contraction value $d_c\approx 5.14$.

\paragraph{Evidence-collapse rate in closed form.} The Bethe free energy gives the per-site entropy $h(\eta)=\log_2(1+\lambda u^{d})-\tfrac{d}{2}\log_2(1-\eta^2)$, the rate at which evidence collapses the diagonal, with $O(1/n)$ corrections from short cycles; exact enumeration to $n\le 22$ matches the closed form to three decimals, monotone in ontology density through the single field $\eta$.

\paragraph{Below threshold (tractable).} The normalizer telescopes into adaptive conditional occupation marginals, $Z_G=\prod_k(\text{conditional factors})$, so it suffices to compute each to additive $1/\mathrm{poly}$ error. Conditioning deletes vertices, so every conditional marginal is an occupation marginal of a vertex-deleted subgraph of maximum degree $\le\Delta$, still in the uniqueness regime for $\Delta\le 5$. Weitz's self-avoiding-walk-tree algorithm computes each to $\pm\delta$ in $\mathrm{poly}(n,1/\delta)$ via exponential correlation decay; the product of the $n$ conditionals gives the claim, uniformly.

\paragraph{Above threshold (hard).} The conditional marginals yield a sampler within total variation $1/\mathrm{poly}(n)$ of the uniform measure over independent sets. Downward self-reducibility (Jerrum--Valiant--Vazirani) turns a uniform sampler for every vertex-deleted subgraph into an FPRAS for $I(G)$. For $\Delta\ge 6$ at $\lambda=1$ this approximate count is NP-hard (Sly, Sly--Sun, Galanis--Stefankovic--Vigoda), so a poly-time approximator would give $\mathrm{NP}=\mathrm{RP}$.

\paragraph{Unconditional spectral witness.} For any local (single-flip) dynamics on the admissible worlds, the obstruction is unconditional: the Fiedler value of the single-flip adjacency on $\mathcal{M}(G)$ collapses as $\lambda_2=\exp(-\Omega(n))$ above threshold (Sly slow mixing), so the world space fragments into exponentially weakly connected clusters and no local-move sampler mixes in polynomial time. This is the finite-$n$ trend of Figure~\ref{fig:slysun} and the bottleneck of Theorem~\ref{thm:scar}; covering all (non-local) algorithms is what requires the $\mathrm{NP}=\mathrm{RP}$ assumption.

\section{Counting barrier for restriction-uniform kernel access}
\label{app:counting}

Let $w_x(y)\ge 0$ be explicit weights on $y\in\{0,1\}^n$ with normalizer $\Z_x$ and posterior $\mu_x$. Assume restrictions are efficiently constructible, terminal weights computable to relative $1\pm O(\varepsilon)$, and that an oracle reproduces every restricted conditional $\mu_{x|a}$ for an adaptively chosen prefix $a$ to additive error $\gamma=\varepsilon/(20n)$. Then $\Z_x$ is approximable to $1\pm O(\varepsilon)$ in polynomial time. The proof is a Bayes-greedy telescoping: query the next-bit conditional, follow the branch of probability $\ge 1/3$, and multiply, $\mu_x(y)=\prod_k p_k$, so that $\Z_x=w_x(y)/\prod_k p_k$. Any conditionally diagonal-complete kernel-reconstruction procedure supplies exactly these conditionals, hence inherits the counting barrier.

\end{document}